\newif\iflong
\newif\ifshort

 \longtrue

\iflong
\else
\shorttrue
\fi

\documentclass[nohyperref]{article}

 \usepackage{microtype}
\usepackage{graphicx}
\usepackage{wrapfig}
\usepackage{subfigure,xspace}
\usepackage{tikz}
\usepackage{natbib}
\usetikzlibrary{shapes, positioning, patterns,decorations.pathreplacing}\tikzstyle{vertex}=[circle, draw, inner sep=0pt, minimum size=4pt,outer sep = 1pt]

\usepackage{booktabs}  
    \usepackage{hyperref}

 \usepackage[final]{neurips_2023}

 \usepackage{amsmath}
\usepackage{amssymb}
\usepackage{mathtools}
\usepackage{amsthm}

 \usepackage[capitalize,noabbrev]{cleveref}

   \theoremstyle{plain}

   \usepackage[textsize=tiny]{todonotes}

 \usepackage{complexity}
 \usepackage{enumitem}

\usepackage[utf8]{inputenc}  \usepackage{hyperref}        \usepackage{cleveref}

\usepackage{url}             \usepackage{booktabs}        \usepackage{amsfonts}        \usepackage{nicefrac}        \usepackage{microtype}       \usepackage{xcolor}          \usepackage{boxedminipage}

\renewcommand{\cc}[1]{{\mbox{\textnormal{\textsf{#1}}}}\xspace}   

\newcommand{\bigoh}{\mathcal{O}}

\newcommand{\RR}{\mathbb{R}}

 \newcommand{\ER}{\ensuremath \exists \cc{R}}

\renewcommand{\poly}{\mathrm{poly}}

\newcommand{\NNT}{\textsc{NNT}\xspace}
\newcommand{\lNNT}{\textsc{Lin-NNT}\xspace}
\newcommand{\rNNT}{\textsc{ReLU-NNT}\xspace}
\newcommand{\mNNT}{\textsc{Mixed-NNT}\xspace}

\newtheorem{theorem}{Theorem}

\newtheorem{observation}[theorem]{Observation}
\newtheorem{proposition}[theorem]{Proposition}

\newtheorem{corollary}[theorem]{Corollary}
\newtheorem{lemma}[theorem]{Lemma}
\newtheorem{lemmabis}{Lemma}[theorem]

\author{   Cornelius Brand \\
  Algorithms \& Complexity Group \\
  Vienna University of Technology \\
  Favoritenstraße 9-11, 1040 Vienna, Austria \\
  \texttt{cbrand@ac.tuwien.ac.at} \\
  \And
  Robert Ganian \\
  Algorithms \& Complexity Group \\
  Vienna University of Technology \\
  Favoritenstraße 9-11, 1040 Vienna, Austria \\
  \texttt{rganian@gmail.com} \\
  \And
  Mathis Rocton \\
  Algorithms \& Complexity Group \\
  Vienna University of Technology \\
  Favoritenstraße 9-11, 1040 Vienna, Austria \\
  \texttt{mrocton@ac.tuwien.ac.at} \\
                                                               }

\title{New Complexity-Theoretic Frontiers of Tractability for Neural Network Training (Full Version)}

\begin{document}

\maketitle

\begin{abstract}
 In spite of the fundamental role of neural networks in contemporary machine learning research, our understanding of the computational complexity of optimally training neural networks remains incomplete even when dealing with the simplest kinds of activation functions. Indeed, while there has been a number of very recent results that establish ever-tighter lower bounds for the problem under linear and ReLU activation functions, less progress has been made towards the identification of novel polynomial-time tractable network architectures. 
In this article we obtain novel algorithmic upper bounds for training linear- and ReLU-activated neural networks to optimality which push the boundaries of tractability for these problems beyond the previous state of the art. 
In particular, for ReLU networks we establish the polynomial-time tractability of all architectures where hidden neurons have an out-degree of $1$, improving upon the previous algorithm of Arora, Basu, Mianjy and Mukherjee. On the other hand, for networks with linear activation functions we identify the first non-trivial polynomial-time solvable class of networks by obtaining an algorithm that can optimally train network architectures satisfying a novel data throughput condition.
\end{abstract}

\section{Introduction}
\begin{wrapfigure}{r}{0.4\textwidth}
\begin{center}
\vspace{-0.72cm}
\includegraphics[scale=0.5]{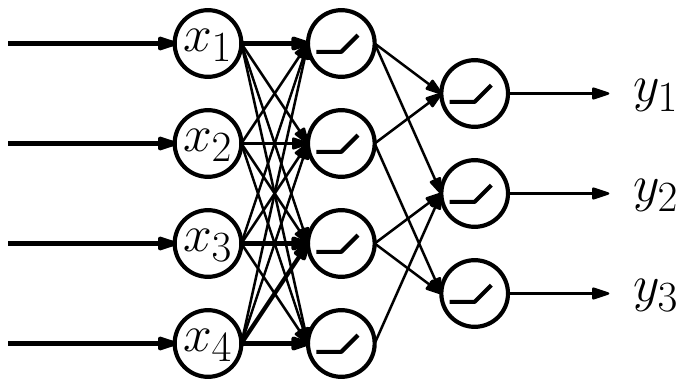}
\end{center}
\caption{A neural network with two layers of hidden, ReLU-activated neurons, computing a function $f: \RR^4 \rightarrow \RR^3$.}
\vspace{-0.6cm}
\label{fig:network}
\end{wrapfigure}

Neural networks are a prominent tool in contemporary machine learning, one which has found ubiquitous applications throughout modern computer science~\citep{DeepL}. A neural network (cf. Figure~\ref{fig:network}) can be thought of as a directed acyclic network consisting of $n$ sources (typically called \emph{input nodes}), and $w$ remaining nodes, which we partition into \emph{output neurons} (the sinks) and \emph{hidden neurons} (all other nodes in the network, typically organized into \emph{layers}).

Given the prominence of neural networks, it is surprising that---in spite of recent efforts---relatively little is known about complexity-theoretic upper bounds for 
 the fundamental problem of Neural Network Training (\NNT): given a network and a training data set containing $m$ samples, compute weights and biases that best fit the training data.\footnote{Formal definitions are provided in the Preliminaries.}
This contrasts recent advances leading to novel algorithmic upper bounds as well as extensive complexity-theoretic landscapes for many other problems arising in machine learning and neural network research, including, e.g., principal component analysis~\citep{SimonovFGP19,DahiyaFPS21}, clustering~\citep{GanianHKOS22}, Bayesian network learning~\citep{OrdyniakS13,GanianK21,GruttemeierK22} and matrix completion~\citep{GanianKOS18}.

Naturally, the complexity of solving \NNT to optimality strongly depends on the activation function used in the network as well as on what kind of restrictions are placed on the structure of the network, but---as we will see---even in the best studied and simplest cases we lack an understanding of the frontiers of polynomial-time tractability. 
The Rectified Linear Unit (ReLU) activation function is a natural first choice to consider for a complexity-theoretic investigation; over the past years, it has become the most popular and widely studied activation function used in neural networks~\citep{GaoCJ20}.
Goel, Klivans, Manurangsi and Reich\-man~(\citeyear{GoelKM021}) recently established that the Neural Network Training problem with ReLU activation functions (\rNNT) is \NP-hard, even when restricted to networks with no hidden neurons, meaning that 
    \rNNT is computationally intractable when no further restrictions are placed on the network. In fact, an even more recent reduction shows that \rNNT is also complete for the complexity class \ER\ when restricted to complete networks~\citep{Fullyconnet} with precisely two input nodes and two output neurons, as opposed to networks with arbitrary structure. Moreover, the problem was shown to be hard for the complexity class \cc{W[1]}~\citep{FroeseHN22} when parameterized by $n$ and also intractable for instances of fixed dimension~\cite{Froese23}.

While these results seem discouraging at first, an immediate question that arises is whether we can at least efficiently train ``small'' networks? In particular, what is the complexity of \rNNT (as well as other variants of \NNT) when restricted to networks where $n$, $w$, or both are (fixed but arbitrary) constants?
 Here we find a stark contrast between the extensive body of research on the complexity of other natural meta-problems\footnote{Consider, e.g., Lenstra's celebrated theorem for solving Integer Linear Programs with constantly-many variables~\citep{Lenstra83} and the multitude of improvements of that initial result~\citep{Kannan87,FrankTardos87,GanianO19,BrandKO21}, or Schaefer's famous dichotomy theorem for Boolean CSPs~\citep{Schaefer78} and its more recent generalization to bounded-domain CSPs~\citep{Bulatov17}.} and how little is known for \textsc{(ReLU-)NNT}.
  Indeed, while recent reductions immediately rule out polynomial-time tractability for networks with constant $w$~\citep{AbrahamsenKM21,FroeseHN22} or constant $n$~\citep{Fullyconnet}, the complexity of \rNNT\ in the base case where both $n$ and $w$ are bounded by a constant (i.e., when training constant-size networks) is a prominent open question in the field. The best partial answer we had so far for this high-profile question lies in the seminal work of Arora, Basu, Mianjy and Mukherjee~(\citeyear{AroraBMM18}) (see also the follow-up work of Boob, Dey and Lan~(\citeyear{BoobDL22})), who developed a polynomial-time algorithm that can solve \rNNT\ for constant-size networks with a single layer of hidden neurons and one output neuron.
 
Naturally, the study of \NNT\ is also relevant for other popular activation functions such as the Sigmoid or Tanh---however, these seem even less amenable to contemporary algorithmic techniques than ReLUs. On the other hand, there exists a class of fundamental activation functions for which the training problem turns out to be easier than for ReLUs: linear functions. Linear-activated networks 
 have been considered in numerous settings as well as theoretical examples~\citep{AbrahamsenKM21,linearact,PanigrahiSG20} and can in many ways be viewed as the ``baseline'' choice for considering the complexity of neural network training. Indeed, unlike for \rNNT, it is not difficult to show that \lNNT (i.e., \NNT with linear activation functions) is polynomial-time tractable for any network of bounded (i.e., constant) size, 
 but is known to be \ER-complete for general networks~\citep{AbrahamsenKM21} (see also Section~\ref{sec:known}). However, the complexity of \lNNT for general networks with bounded $w$---i.e., with constantly many hidden and output neurons---remains open.

\smallskip
\noindent
\textbf{Contributions.} While several papers have recently provided lower bounds for \NNT~\citep{AbrahamsenKM21,FroeseHN22,Fullyconnet}, much less progress has been made towards pushing the frontiers of tractability of the problem. In this paper, we remedy this by providing new algorithmic upper bounds that supersede the previous state of the art for 
\rNNT and \lNNT.

As our first major contribution, we establish the polynomial-time tractability of \rNNT\ for all constant-size networks satisfying the property that all hidden neurons have out-degree at most $1$. We remark that while this condition is trivially satisfied by every neural network that can be handled by the aforementioned algorithms~\citep{AroraBMM18,BoobDL22}, our result may also be applied to much more general networks and in particular can support more than one output neuron and multiple hidden layers. 
The algorithm underlying our result can also deal with networks combining ReLU and linear activation functions. Moreover, its running time matches the recent algorithmic lower bound of~\citet{FroeseHN22} for single-neuron architectures, meaning that it is essentially optimal in that setting.

One remarkable consequence of our exact algorithm for \rNNT\ is that it allows us to formulate a procedure which can effectively deal with \emph{every} constant-size network architecture (even those not satisfying the restriction on the out-degree of hidden neurons). In particular, we show that every constant-size ReLU-activated network architecture can be transformed into a new network architecture which is (1) at least as expressive as the original architecture, and (2) can be trained in polynomial time. Crucially, the depth of the new architecture remains the same and its size depends solely on the size of the original architecture.

Next, we turn to the simpler setting of training neural networks with linear activation functions, i.e., \lNNT.
Recall that here, we are dealing with networks containing a bounded number of hidden/output neurons, but a potentially large number of input nodes. 
Once we depart from the case of architectures with complete connections between consecutive layers (which can be solved by a direct application of linear regression), training neural networks to optimality on a given data set is known to be hopelessly hard~\citep{AbrahamsenKM21} unless the admissible architectures/data are restricted to highly specialized scenarios, such as those with a single output neuron (see Section~\ref{sec:known}).
As our second contribution, we identify a general data throughput condition that allows us to circumvent the aforementioned intractability and guarantee the polynomial-time tractability of \lNNT. 
 In particular, we say that a network admits an \emph{untangling} if a subset of its nodes can be partitioned into connected blocks---one for each output and input node---such that:

\begin{itemize}[noitemsep,leftmargin=10pt,topsep=0pt]
\item[1a.] each input node is the sole source of an input-block;
 \item[1b.] each output node is the sole sink of an output-block; and
\item[2.] the adjacencies between blocks reflects reachability between inputs and outputs.
 \end{itemize}

Intuitively, an untangling can be seen as a backbone in a network where each block either aggregates information from a source or into a sink. While it may not be obvious from the definition at first glance, not only do single-output (as well as single-input) neural networks admit a trivial untangling, but so do many other network architectures one may consider. To exploit our notion of untanglings, we obtain an algorithm that solves \lNNT in polynomial time whenever the network comes with an untangling. We combine this result with a procedure that computes an untangling (or determines that none exists) in linear time on all architectures with a constant number of hidden neurons, and provide a different linear-time procedure to find untanglings on architectures of constant \emph{treewidth}~\citep{RobertsonS83,Bodlaender16a}.

\iflong
\smallskip
\noindent
\textbf{Paper Organization.} After setting up necessary preliminaries in Section~\ref{sec:prelim}, we proceed by formalizing the \NNT\ problem and surveying known results about its complexity in Section~\ref{sec:known}. We then present our new algorithm for training ReLU-activated neural networks with ReLU in Section~\ref{sec:relu}, and follow up in Section~\ref{sec:exten} by a discussion of the result's implications for obtaining efficiently trainable ReLU-activated architectures. Finally, we present our results centered around linear activation functions in Section~\ref{sec:lin}, which details how untanglings can be computed as well as used to efficiently solve \lNNT.
\fi

\section{Preliminaries}
\label{sec:prelim}

\smallskip
\noindent \textbf{Graphs.}\quad
We assume basic knowledge of graph terminology~\citep{Diestel}.
The graphs considered in this paper are assumed to be \emph{directed}, that is, a \emph{graph} is a pair $G = (V,E)$, where $V$ is a set of vertices and $E$ a set of directed edges. 
\iflong
A \emph{path} is a sequence of distinct vertices $v_1,\ldots,v_k$ such that there is a sequence of directed edges $(v_1,v_2),(v_2,v_3),\ldots,(v_{k-1},v_k)$.
We call a graph \emph{acyclic} if there is no sequence of directed edges $(v_1,v_2),(v_2,v_3),\ldots,(v_k,v_1)$ in $G$. 
\fi
For a vertex $v$, the vertices $u$ such that there is an edge $(u,v)$ in $E$ are called the \emph{in-neighbors} of $v$; correspondingly, the set of in-neighbors is the \emph{in-neighborhood} of $v$. The notions of of out-neighbors and out-neighborhood are defined symmetrically. The \emph{in-degree} and \emph{out-degree} of a vertex is the size of its in- and out-neighborhood, respectively. 
For a vertex subset $X\subseteq V$, we denote by $G[X]$ the graph induced on $X$.
 \iflong
For brevity, we will generally refer to directed edges simply as ``edges''; in the few cases we need to refer to edges which are undirected, we make this clear by explicitly calling them ``undirected edges''.
\fi

\smallskip
\noindent \textbf{Neural Networks.}\quad
The protagonists of this article are {neural networks}, which are acyclic graphs endowed with additional information.
In particular, the \emph{architecture} of a neural network $\mathcal{N}$ is a directed acyclic graph $G = (V,E)$ where the nodes are called \emph{neurons}.
 We refer to nodes with no incoming edges as \emph{input nodes}, while those with no outgoing edges are called \emph{output neurons}. All other nodes are called \emph{hidden neurons}. 
 We refer to edges as \emph{deep} when they are not incident to an input node. 
Typically, one assumes that the hidden neurons are partitioned into \emph{layers} defined by their shortest distance to an input node, where a hidden neuron in layer $i$ only has in-neighbors in layer $i-1$ and out-neighbors in layer $i+1$.

In addition to its neural structure, $\mathcal{N}$ is defined by assigning a \emph{bias} $b_v \in \RR$ for each non-input neuron $v$ and a \emph{weight} $a_e \in \RR$ for each edge $e$.
 Furthermore, we fix an \emph{activation function} $\sigma: \RR \rightarrow \RR$ for the entire neural network. 
To formally map the data dimensions onto the input and output nodes in an architecture, we will assume that these nodes come with an implicit predetermined ordering.
 
A neural network defined as above \emph{computes} a function $f_{\mathcal{N}}: \RR^n \rightarrow \RR^d$, where $n$ is the number of input nodes, and $d$ the number of output neurons of $\mathcal{N}$. To define $f_{\mathcal{N}}$, we say for each neuron $v$ that it computes a function $f_v: \RR^n \rightarrow \RR$, which is inductively defined as follows: The function computed at the $i$-th input node is the coordinate projection $\pi_i: \RR^n \rightarrow \RR, (x_1,\ldots,x_n) \mapsto x_i$. 
For a non-input neuron, let $v_1,\ldots,v_t$ be the in-neighborhood of $v$, again in some arbitrary, but fixed order, and let $\mathbf{a}_v$ be the vector of weights $(a_{v_iv})_{i=1,\ldots,t}$ in the same order. Furthermore, let $\mathbf{y}_v = (y_1,\ldots,y_t)$ be the values of the functions computed at the neuron $v_i$, respectively---that is, $y_i = f_{v_i}(x_1,\ldots,x_n)$. Then, $v$ \emph{computes} $\sigma(\mathbf{a}_v^T \cdot \mathbf{y}_v + b_v)$.
The function computed by the neural network $\mathcal{N}$ itself is then given by $(f_{o_1}(x_1,\ldots,x_n),\ldots,f_{o_d}(x_1,\ldots,x_n))$, where $o_1,\ldots,o_d$ are the output neurons of $\mathcal{N}.$

Clearly, the function computed by $\mathcal{N}$ depends, among other things, on the employed activation function. Here we will consider two specific and well-studied choices for $\sigma$, namely \emph{ReLU} and \emph{linear} activations. 
It is well known that, without loss of generality, these can be assumed to be $\sigma(x) = x$ (i.e., the identity) for linear activations and $\sigma(x) = \max\{0,x\}$ for ReLU activations~\citep{DeepL}.
 We will also employ the notation $x^+ = \max\{0,x\}$.

\smallskip
\noindent\textbf{Training Neural Networks.}\quad
The task of \emph{training} a neural network is to, given an architecture and activation, compute a set of weights as well as biases, such that the function computed by the resulting neural network approximates in the best possible manner a given set of training data, that is, a given set of samples $(x,y)$ where $x$ and $y$ are vectors of the appropriate dimension for the network architecture.

More formally, the input to the $\sigma$-$\NNT$ problem 
 is an architecture $G = (V,E)$ and data points $D \subseteq \RR^n \times \RR^d$, where $n$ is the number of input and $d$ the number of output neurons of the architecture. For every setting $a$ of all edge weights as well as $b$ of all biases, this fully determines a neural network $\mathcal{N}_{a,b}$ that in turn computes a function $f_{a,b} = f_{\mathcal{N}_{a,b}}$.
The task is then to output a weight $a_e$ for every edge $e$ in $G$ as well as a bias $b_v$ for every neuron such that the function computed by the resulting neural network $\mathcal{N}_{a,b}$ minimizes, among all such choices of $a$ and $b$, the $\ell^2_2$-loss function
\iflong
\begin{align} \label{eq:loss}
L_D(a,b) = \sum_{(x^\ast, y^\ast) \in D} || y^\ast - f_{a,b}(x^\ast)||_2^2.
\end{align}
Here, 
\fi
\ifshort
$L_D(a,b) = \sum_{(x^\ast, y^\ast) \in D} || y^\ast - f_{a,b}(x^\ast)||_2^2,$ where
\fi
$||\cdot ||_2$ is the standard Euclidean norm.
  \iflong We remark that for complexity-theoretic reasons, $\sigma$-\NNT\ can also be phrased as an equivalent decision problem where the input additionally contains a \emph{threshold} $\theta$ and the question is whether we can find $a,b$ such that $L_D(a,b) \leq \theta$; in particular, this is used in Section~\ref{sec:known}.
\fi
        
Note that the problem formalization in particular entails that we do not regard the type of activation function as part of the input, whereas the network architecture \emph{is} part of the input by default. This either matches or generalizes previous formalizations of the problem~\citep{AroraBMM18,AbrahamsenKM21,GoelKM021,FroeseHN22,Fullyconnet}. We will denote the problems of training neural networks for linear and ReLU activations as \lNNT and \rNNT, respectively, and the analogous problem where the network contains both linear and ReLU activation functions as \mNNT. Furthermore, while the definition of $\sigma$-$\NNT$ implies that a \emph{solution} is one which minimizes the $\ell^2_2$-loss function, we will also sometimes call these ``optimal solutions'' for emphasis. Last but not least, throughout the article we assume that $|D|\geq 2$ since the case where $|D|\leq 1$ is trivial.

\section{Synopsis of Known Results}
\label{sec:known}
Neural network training is a vast area of research that has received an immense amount of attention both on the theoretical as well as the empirical side. In order to put our results into context, we survey the complexity-theoretic understanding 
 of linear and ReLU neural network training. 
 
\ifshort
\smallskip
\noindent \textbf{Linear Activation.}\quad
 In the simplest case of \lNNT where we assume the size of the whole architecture to be upper-bounded by a constant, it is known that the problem can be expressed as a constant-size system of polynomial inequalities which can be solved in polynomial time, e.g., via the seminal work of \citet{tarski}.  
On the other hand, \lNNT is known to be intractable even if the number of output neurons is upper-bounded by a fixed constant; in fact, the problem was shown to be \ER-hard even when the number of output neurons is fixed to $3$~\citep{AbrahamsenKM21}.

One intermediate case between the general and fixed-size fixed-dimension networks is when the network architecture is unrestricted in size but has a special structure, namely, where all edges between layers of nodes exist. 
In this case, \lNNT\ can be solved in polynomial time by a direct application of multidimensional linear regression~\citep{velu2013multivariate}.  \fi

\iflong
\subsection{Linear Activation}
\label{sub:lin}
Despite our considering the architecture part of the input, 
it does also make sense to ask what happens if we assume it fixed, much like the activation $\sigma$. 
While it is one of the practitioner's central tasks to first design an architecture for a neural network 
(and in this sense, it may not be as fixed as, say, activation functions, which are rarely re-designed), 
and then train it, such a design is much less ephemeral than the other part of the input, namely the data set $D$. 
The simplest question, therefore, that should be answered, is: Is there an efficient algorithm for $\lNNT$ if the network architecture is of fixed size as well?

\subsubsection{Fixed Architecture Size and Fixed Input Dimension}
As it turns out, already this case requires some thought: It is easy to argue inductively that, if $\sigma$ is the identity function, the function computed by the neural network will be a polynomial in $x$, the inputs (in fact, $x$ has degree at most $1$), \emph{as well as} the set of weights $a$ and biases $b$. That is, if we consider every weight and bias an indeterminate, then $f_{\cdot,\cdot}$, where we left $a$ and $b$ unspecified, will be a formal polynomial in these indeterminates, in addition to the inputs $x$. 
In turn, also the loss function $L(\cdot,\cdot,D)$ becomes a formal polynomial in only the indeterminates $a,b$ (because the inputs $x$ are evaluated at each data point $D$, $x$ disappears as an indeterminate, and becomes determinate) of degree only depending on the network architecture. Note, however, that the encoding length of the coefficients appearing in this polynomial, say, $\lambda$, will depend polynomially on the encoding size of the data $D$.
By virtue of the polynomiality of $L(\cdot,\cdot,D)$, asking for a setting of $a,b$ such that $L(a,b,D) \leq \theta$ holds is formally captured by the sentence
\[
\exists a,b: L(a,b,D) \leq \theta
\]
Ever since the seminal work of \citet{tarski}, it has been known that the truth of such sentences (that is, deciding their membership in the so-called \emph{existential theory of the reals}) can be determined within a number of steps that is polynomial in the encoding length $\lambda$ of the coefficients (as well as the encoding length of $\theta$), but exponential in the degree and number of variables of the sentence. Since the latter two quantities are fixed in our case by assumption, the formulation above yields a polynomial-time algorithm for linear $\NNT$ when the network architecture is entirely fixed.

\subsubsection{The General Case}
At the other extreme, if we allow both the number of hidden and input nodes to grow arbitrarily and the architecture of the network to be arbitrary as well, the problem becomes intractable even for linear activation functions~\citep{AbrahamsenKM21}. In particular, \lNNT is \ER-hard, which roughly means that training a neural network of the aforementioned type to optimality is as difficult as solving an arbitrary given system of polynomial inequalities (which is strongly believed to not be polynomial-time tractable). 
  
\subsubsection{Fully Connected Layered Networks}
One intermediate case between the general and fixed-size fixed-dimension networks is when the network architecture is unrestricted in size, but of a special structure, namely, we assume all edges between layers of vertices to exist. 
Now, training fully connected networks with linear activations under mean squared errors is nothing else than multidimensional linear regression. First, observe that if two sets of $n$ inputs and $m$ output neurons with linear activations are fully connected by a single layer of edges from inputs to outputs, then such a neural network computes some affine linear mapping $x \mapsto Ax + b$, with $A \in \RR^{m\times n}$, where the $(i,j)$-entry $A_{i,j}$ of $A$ is given by the weight on the edge between the $i$-th input and the $j$-th output neuron, and the $j$-th bias $b_j$ is the sum all biases of the inputs and the $j$-th output bias. Computing such weights is the aim of classic multidimensional linear regression, and there are efficient and explicit formulations for the optimal estimators in this setting~\citep{velu2013multivariate}.

By the same token, if we add a second layer of edges to this example, the computed mapping  becomes $x \mapsto A_1 A_2 x + b$, where $A_2$ contains the weights of the edges adjacent to the inputs, and $A_1$ those adjacent to the outputs (and the biases arise similarly as before).
If we denote with $r$ the cardinality of the hidden layer of neurons, hence $A_2 \in \RR^{n\times r}, A_1 \in \RR^{r\times m}$, it is evident that $A_2 A_1$ is a matrix of rank at most $r$. In turn, all rank-$\leq r$ matrices can be represented in this way for some choice of $A_1,A_2$. 
By asking for a setting of weights and biases that minimize the sum of squared errors, we are thus asking for the best rank-$\leq r$ matrix under the squared mean error --- so-called multivariate reduced-rank regression, which is an important and well-studied special case of multivariate regression that admits explicit solutions \citep{velu2013multivariate}.
Now, importantly, for more than a single layer of hidden neurons, the problem remains the same. 
Indeed, whenever we have several such layers of varying sizes, the smallest among them, call its size again $r$, determines the maximal rank of the sought solution, so that the problem in its full generality remains efficiently solvable, as long as all edge layers are fully connected.

\subsubsection{Relation to Our Results}
\fi

The preceding discussion leaves open a variety of possible directions, such as the complexity of training networks that are not necessarily fully connected, but constrained to have two or even a single output neuron.
On the other hand, one may ask what happens if we consider only the number of hidden neurons to be bounded by a constant, without restricting the number of input nodes and output neurons.
 Our results deal with an intermediate case that captures, among other things, the easy case of single-output networks, as well as some more general networks with a constant number of hidden neurons.
In particular, we will formulate a technical condition---whether or not the architecture can be \emph{untangled}---that will allow us to train such networks with linear activations in polynomial time.

\ifshort
\smallskip
\noindent \textbf{Rectified Linear Activation.}\quad
\fi
\iflong
\subsection{Rectified Linear Activation} 
\fi
 Recent results from computational complexity theory provide strong lower bounds on the inherent difficulty of the problem of neural network training with ReLU activation functions, already on highly restrictive architectures. Indeed,\citet{GoelKM021} and independently \citet{DeyWX20} have proven that, under standard complexity assumptions,  even training a single ReLU-activated neuron with $n$ input nodes cannot
 be accomplished in time polynomial in $n$ and the size of the training set $D$; see also the related lower bounds of \citet{Fullyconnet}.
  
A seminal algorithmic result is that of \citet{AroraBMM18} for training shallow networks consisting of one layer of hidden ReLU neurons and a single output neuron equipped with a linear activation function.
Their algorithm runs in time $|D|^{\bigoh(w\cdot n)}$, where $D$ is the training set, $n$ the number of input nodes and $w$ the number of other nodes in the architecture. \citet{BoobDL22} showed that the same approach can be used to handle cases where the output neuron is equipped with a ReLU activation function as well.
\citet{FroeseHN22} have shown that this running time dependency on $n$ in the exponent of $D$ is asymptotically optimal even for a single hidden neuron, ruling out algorithms running in time, e.g., $2^n\cdot D^{o(n)}$ for this basic case.

It is a major open problem to extend the techniques of \citet{AroraBMM18} to general deep networks---in particular, the complexity of \rNNT\ when restricted to networks whose size is upper-bounded by an arbitrary but fixed constant is wide open. 
Our work shows that it is nevertheless possible to identify a much more general set of architectures than those covered by the aforementioned results~\citep{AroraBMM18,BoobDL22} for which \rNNT\ is polynomial-time solvable.
     
\section{Training ReLU Networks}
\label{sec:relu}
In this section, we establish our tractability results for ReLU-activated neural network training when each hidden neuron has out-degree at most $1$.
As our first step, we prove that in such instances the weights in the hidden layers can be discretized to only two values.
  
\iflong
\begin{lemma}
\fi
\ifshort
\begin{lemma}
\fi
\label{lemma:slide_local_relu}
For any ReLU network $\mathcal{N}$ and for any hidden neuron $u$ with precisely one out-neighbor $w$, there is a ReLU network $\mathcal{N}^*$ with the same architecture such that $f_\mathcal{N}=f_{\mathcal{N}^*}$ and $a^*_{uw}\in \{-1,1\}$.
Moreover, the only parameters in which $\mathcal{N}$ and $\mathcal{N}^*$ differ are the weights of the edges incident to $u$ and the bias of $u$.
\end{lemma}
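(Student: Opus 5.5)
The plan is to use the positive homogeneity of ReLU: for every $c>0$ and $z\in\RR$ we have $\max\{0,cz\}=c\max\{0,z\}$. The idea is to rescale all parameters entering $u$ by a positive factor and compensate on the single outgoing edge $uw$.

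We first dispose of the degenerate case $a_{uw}=0$. Here $u$ contributes nothing to the pre-activation of $w$. Since $u$ has no other out-neighbor, $f_u$ influences no other neuron, so $u$ contributes nothing to $f_\mathcal{N}$ at all. Define $\mathcal{N}^*$ by setting $a^*_{uw}=1$, $b^*_u=0$, and the weights of all in-edges of $u$ to $0$. Then $f_{u}^*\equiv \max\{0,0\}=0$, so the contribution of $u$ to $w$ is $1\cdot 0=0$, exactly as before. All other neurons compute the same functions, and only parameters incident to $u$ (and its bias) were changed.

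In the main case $a_{uw}\neq 0$, set $c=|a_{uw}|>0$ and define $\mathcal{N}^*$ as follows:
\begin{itemize}
\item $a^*_{vu}=c\,a_{vu}$ for every in-neighbor $v$ of $u$;
\item $b^*_u=c\,b_u$;
\item $a^*_{uw}=a_{uw}/c\in\{-1,1\}$;
\item all other parameters are unchanged.
\end{itemize}
We then verify the identity by induction along a topological order of the architecture. Neurons preceding $u$, and in particular the in-neighbors of $u$, are unaffected because their parameters are unchanged. For $u$ itself,
\[
f^*_u=\max\{0,\,c(\mathbf a_u^T\mathbf y_u+b_u)\}=c\,f_u .
\]
At $w$ the contribution of $u$ to the pre-activation is $a^*_{uw}f^*_u=(a_{uw}/c)\,c\,f_u=a_{uw}f_u$, and the other in-edges of $w$ are unchanged. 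Hence $f^*_w=f_w$.

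Since $u$'s only out-neighbor is $w$, no other neuron receives $f^*_u$ directly. Every neuron after $w$ in the topological order therefore computes the same function as before, so $f_{\mathcal{N}^*}=f_\mathcal{N}$. The only delicate point is this use of out-degree one: it guarantees that the scaled value $c f_u$ is seen only through the compensated edge $uw$. There is no real obstacle beyond this.
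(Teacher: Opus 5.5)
Your proof is correct and takes the same approach as the paper: scale the incoming weights and the bias of $u$ by $|a_{uw}|$, replace $a_{uw}$ by its sign, and use positive homogeneity of ReLU together with out-degree one. Your separate treatment of $a_{uw}=0$ gives exactly the paper's construction in that case, since scaling by $|a_{uw}|=0$ zeroes the incoming parameters and the paper's convention sets $a^*_{uw}=1$.
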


\ifshort
\begin{proof}[Proof Sketch]
 Let $(v_i)_i$ be the (non-empty) family of predecessors of $u$, and $w$ its only successor.
To obtain $\mathcal{N}^*$, we construct a new set of weights and biases at $u$, where $\mathrm{sgn}(x)$ is the sign function:
\begin{align*}
\forall i, a^*_{v_i u} &=a_{v_i u} \cdot |a_{uw}| \\
b^*_u &=b_u \cdot |a_{uw}|\\
a^*_{uw} &=\mathrm{sgn}(a_{uw})  \quad ( 1 \text{ if }a_{uw}=0 ) 
\end{align*}
All other weights in $\mathcal{N}^*$ remain the same as in $\mathcal{N}$. To complete the proof, it suffices to show that $f_\mathcal{N}=f_{\mathcal{N}^*}$ holds by using the fact that the ReLU function commutes with multiplication with non-negative factors.
\end{proof}
\fi

\iflong
\begin{proof}
 Let us take an arbitrary ReLU network and $u$ a hidden neuron of out-degree $1$.
Let us call $(v_i)_i$ the (non-empty) family of predecessors of $u$, and $w$ its only successor. Let us set:
\begin{align*}
\forall i, a^*_{v_i u} &=a_{v_i u} \cdot |a_{uw}| \\
b^*_u &=b_u \cdot |a_{uw}|\\
a^*_{uw} &=\mathrm{sgn}(a_{uw})  \quad ( 1 \text{ if }a_{uw}=0 ) 
\end{align*}
We claim that these modifications do not change the function computed by the network. Indeed, let us prove that for all possible inputs the value received by $w$ remains the same.

\begin{align*}
y^* &=a^*_{uw} \cdot \left( b^*_u + \sum_i a^*_{v_iu}\cdot y_i \right)^+ \\
&=a^*_{uw} \cdot \left( |a_{uw}|\cdot (b_u + \sum_i a_{v_i u} \cdot y_i) \right)^+,
\end{align*}
and hence
$$
y^*=a_{uw}\cdot \left( b_u + \sum_i a_{v_i u} \cdot y_i \right)^+ =y
$$
Indeed, the ReLU operator commutes with multiplication by a \textit{non-negative} factor.
The rest of the network remains exactly the same, thus the network computes the same function.
\end{proof}
\fi

By iteratively applying Lemma~\ref{lemma:slide_local_relu}, we obtain:

\iflong
\begin{lemma}
\fi
\ifshort
\begin{lemma}
\fi
\label{lemma:slide_global_relu}
Let $\mathcal{N}$ be a ReLU structure such that every hidden neuron has out-degree $1$. 
Then there exists an optimal solution to \rNNT such that all deep edges have weights in $\{1,-1\}$.
\end{lemma}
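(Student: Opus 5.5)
Starting from an arbitrary optimal solution $\mathcal{N}$ of the \rNNT instance, I will apply Lemma~\ref{lemma:slide_local_relu} to each hidden neuron in turn and argue that the result is still optimal and has all deep weights in $\{-1,1\}$. Optimality is immediate once the function is preserved, since the loss $L_D$ depends only on $f_{\mathcal{N}}$ and the architecture never changes. The real question is whether an optimum exists at all, because the infimum of $L_D$ need not be attained. I will take existence of an optimum as implicit in the problem definition; the claim is then that \emph{some} optimal solution has the stated form. If existence has to be handled, the same normalization shows that the infimum over the restricted family equals the global infimum. In that case I would add a remark reducing the question to attainment within the restricted parameter space.

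A deep edge is an edge $(u,w)$ whose tail $u$ is not an input node. Since $u$ has an out-neighbor, it is not an output neuron, so $u$ is a hidden neuron. By hypothesis $u$ has out-degree $1$, so $(u,w)$ is the unique out-edge of $u$. The deep edges are therefore in bijection with the hidden neurons, each being the outgoing edge of its hidden neuron.

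Next I fix a reverse topological order $u_1,\dots,u_k$ of the hidden neurons, so that if $(u_i,u_j)$ is an edge then $j<i$. I process the neurons in this order, applying Lemma~\ref{lemma:slide_local_relu} to $u_i$. This sets $a_{u_i w}\in\{-1,1\}$ and otherwise changes only the weights of edges incident to $u_i$ and the bias of $u_i$. The edges changed besides the out-edge are the in-edges $(v,u_i)$. Such an edge may itself be deep if $v$ is hidden: it is the out-edge of $v$. Since $v$ precedes $u_i$ topologically, $v$ comes later in the reverse order and has not yet been processed. Processing $u_i$ therefore never disturbs an out-edge that has already been normalized.

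After all $k$ steps, every deep edge has weight in $\{-1,1\}$, and the function is unchanged at every step. The main obstacle is precisely this ordering issue, since the lemma rescales the incoming weights of $u$, which are themselves deep edges in multi-layer networks. Processing from the outputs backwards resolves it, and a short induction on $i$ makes the argument precise. The invariant is that after step $i$ the out-edges of $u_1,\dots,u_i$ all have weights in $\{-1,1\}$ and the network still computes $f_{\mathcal{N}}$.
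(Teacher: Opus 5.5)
Your proposal is correct and matches the paper's proof: apply Lemma~\ref{lemma:slide_local_relu} to every hidden neuron, starting from those closest to the outputs. The computed function, and hence the loss, is unchanged, and previously normalized out-edges are never rescaled. Your reverse-topological invariant spells out the ordering that the paper compresses into ``starting with the deepest layers,'' and like the paper you assume that an optimal solution exists.
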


\iflong
\begin{proof}
From any optimal solution, using Lemma~\ref{lemma:slide_local_relu} on every hidden neuron, starting with the deepest layers, immediately yields a different solution with the desired properties, and which is also optimal because it computes the same function.
\end{proof}
\fi

Lemma~\ref{lemma:slide_global_relu} will later allow us to reduce our search space for \rNNT by only needing to consider two options for the weights of each deep edge. 
 
\paragraph{Partitioning by Hyperplanes.}
The other key argument used to reduce the search space further concerns the way ReLU neurons partition the training set into a ``dead area'' (i.e., the elements where the linear unit is rectified) and an ``active area'' (where the unit behaves as a linear function). 
More precisely, given a data set $D$ and a ReLU-activated neuron $u$, we say that $u$ \emph{partitions} $D$ into
\ifshort
an \emph{active area} $D|_u$, which contains all data points for which $f_u$ outputs a value greater than zero, and a \emph{dead area} $\overline{D|_u}=D\setminus D|_u$, which contains all data points for which $f_u$ outputs zero.
\fi
\iflong
\begin{itemize}
\item an \emph{active area} $D|_u$, which contains all data points for which $f_u$ outputs a value greater than zero, and
\item a \emph{dead area} $\overline{D|_u}=D\setminus D|_u$, which contains all data points for which $f_u$ outputs zero.
\end{itemize} 
\fi

As a base case, let us consider a 
neuron $u$ in the first hidden layer of a neural network and let $\mathbf{a}_u^T$ be the vector of weights for the edges from the input nodes to $u$, where a non-edge would be represented as a weight of $0$. As a trivial upper bound, there are in general at most $2^{|D|}$ ways $u$ can partition $D$ into an active and dead area, which provides a trivial upper bound for an algorithm enumerating all such partitions. However, we have that $f_u(D_i)=(\mathbf{a}_u^T \cdot D_i + b_u)^+$ for each data point $D_i\in D$ and moreover $\mathbf{a}_u^T \cdot \mathbf{x} + b_u$ defines a hyperplane in an $n$-dimensional space.
This allows us to obtain an algorithm with a better running time bound for enumerating the partitions of the point set that is inspired by a result of~\cite{Megiddo88}.

\iflong
\begin{lemma}
\fi
\ifshort
\begin{lemma}
\fi
\label{lem:enum_partitions}
Let $D \subseteq \RR^n$ be a finite set of points.
Then, the set of partitions $\Pi = \{(A, B) \mid A \cup B = D $ and $A, B$ are separated by a hyperplane$\}$ can be enumerated in time $|D|^{\bigoh(n)}$.
\end{lemma}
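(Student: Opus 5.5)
The plan is to show that every hyperplane-separable partition $(A,B)$ of $D$ is determined by a small witness: a subset of at most $n$ (or $n+1$) points of $D$ together with $\bigoh(1)$ bits of side information. Enumerating all such witnesses then takes $|D|^{\bigoh(n)}$ steps. For each witness we can compute the induced partition in time polynomial in $|D|$ and $n$, and verify it if desired. Throughout we use the convention matching the active/dead split: $A=\{x\in D : \mathbf{a}^T x + b > 0\}$ and $B = D\setminus A$.

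The first step is normalization. Fix a separable partition realized by $(\mathbf{a},b)$. Consider the polyhedral cone of all $(\mathbf{a},b)\in\RR^{n+1}$ that realize the same partition. It is cut out by the strict inequalities $\mathbf{a}^Tx+b>0$ for $x\in A$ and the weak inequalities $\mathbf{a}^Tx+b\le 0$ for $x\in B$. We perturb $(\mathbf{a},b)$ inside this region, decreasing $b$ and rotating, while keeping $A$ strictly positive and $B$ non-positive. The goal is to make the hyperplane pass through as many points of $B$ as possible, until it is pinned down by an affinely independent set $S\subseteq B$ of size $k\le n$ lying on it. Equivalently, this is a vertex or face argument on the arrangement of the $|D|$ hyperplanes $\{(\mathbf{a},b): \mathbf{a}^Tx+b=0\}$, $x\in D$, in parameter space. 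Each cell of this arrangement corresponds to one sign pattern. By the classical bound, the number of cells of an arrangement of $|D|$ hyperplanes in $\RR^{n+1}$ is $\bigoh(|D|^{n+1})$, which already gives the count bound.

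The algorithmic step is to make this enumeration explicit, following the Megiddo-style argument. We iterate over all subsets $S\subseteq D$ of size at most $n+1$. For each $S$ we take a hyperplane $H_S$ through $S$; when the points are affinely dependent we use any canonical choice, e.g.\ via a least-norm solution. Points of $D$ off $H_S$ are classified by side. Points on $H_S$ (possibly more than $|S|$ in degenerate configurations) are classified recursively, by applying the same procedure inside $H_S$, which has dimension $n-1$. We also try both orientations. The recursion has depth at most $n$ and branching $|D|^{\bigoh(1)}$ per level, so the total work is $|D|^{\bigoh(n)}$. Every output can be checked for separability by a linear program with $n+1$ variables and $|D|$ constraints, in polynomial time. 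Duplicates can be removed by sorting.

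The main obstacle is completeness in degenerate position. We must show that every separable partition is produced by some branch. I would argue this by induction on dimension. Starting from a separating $(\mathbf{a},b)$, translate the hyperplane until it first touches a point of $D$. Then, while keeping it in contact with the touched points, rotate it until it contains a maximal affinely independent set. The sides of all points not on the final hyperplane $H$ are unchanged. The points on $H$ are still separated by the original hyperplane, whose restriction to $H$ is a hyperplane of $H$ unless $H$ is parallel to it, in which case they all lie on one side. This is exactly the recursive subproblem. Handling the strict/weak boundary convention and the parallel case carefully is where I expect the fiddly work to be.
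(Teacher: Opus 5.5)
\paragraph{Gap: the recursion does not run in $|D|^{\bigoh(n)}$.}
Your completeness induction is sound: you pin the hyperplane onto data points, then recurse inside it to recover how the points lying on it are split. The problem is the running-time claim, which does not follow from the procedure you describe. You assert branching $|D|^{\bigoh(1)}$ per level. But ``applying the same procedure inside $H_S$'' means that inside a flat of dimension $m$ you again iterate over all subsets of size up to $m+1$ of the points on that flat, which is $|D|^{\Theta(m)}$ branches. Nothing prevents many points from lying on $H_S$, then on a hyperplane of $H_S$, and so on. On a grid $\{0,\dots,k\}^n$, every coordinate flat of dimension $m$ contains $|D|^{m/n}$ points. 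Following chains of coordinate flats alone already produces $|D|^{\Theta(n^2)}$ leaves in your recursion tree. So as written you get $|D|^{\bigoh(n^2)}$, not $|D|^{\bigoh(n)}$. The linear exponent is exactly what the lemma asserts, and it is what feeds the $|D|^{\bigoh(n\cdot w\cdot 2^w)}$ bound of Theorem~\ref{th:relu}.

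\paragraph{Fix: nest the choices.}
In your argument, the successive flats $H=G_1\supset G_2\supset\cdots$ are each affinely spanned by the data points they contain. You can therefore pick affinely independent spanning sets greedily from the innermost flat outwards, with $S_{j+1}\subseteq S_j$ and $|S_j|=|S_{j+1}|+1$. A whole branch is then encoded by one ordered tuple of at most $n$ points of $D$, whose prefixes span the flats. Add one orientation bit per level and a final bit for the case where all remaining points lie on one side. This gives $|D|^{n}\cdot 2^{\bigoh(n)}=|D|^{\bigoh(n)}$ branches. Memoizing on the flat also works: your recursive subproblem depends only on $H_S$, not on $S$, and only $|D|^{\bigoh(n)}$ flats are spanned by points of $D$.

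\paragraph{Comparison with the paper.}
The paper avoids degenerate configurations altogether. It takes an optimal vertex of the bounded max-margin polyhedron in $(\mathbf{v},b,r)$ with $-1\le v_i\le 1$. That vertex is pinned by $n+2$ tight constraints, chosen among labelled data points, box constraints and $r\ge 0$. For a separable partition $r>0$ at the optimum, so no data point lies on the resulting hyperplane. The signs then read off the partition directly, and no recursion is needed.
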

\iflong
\begin{proof}
The algorithm proceeds by enumerating a superset of all such possible partitions, which in total will still obey the desired bound, whereas we then discard superfluous elements by checking each candidate partition.
To this end, note that for any $\pi = (A,B) \in \Pi$, it is the case that there must exist a normal vector $\mathbf{v}_\pi \in \RR^n$, an offset $b_\pi \in \RR$ as well as some maximal slack value $r_\pi \geq 0$ such that $\mathbf{v}_\pi^T \mathbf{a} \geq b_\pi + r_\pi$ for all $\mathbf{a} \in A$ and $\mathbf{v}_\pi^T \mathbf{b} \leq b_\pi - r_\pi$ for all $\mathbf{b} \in B$. By rescaling, we may assume w.l.o.g.\ that $\mathbf{v}_\pi \in [-1,1]^n$.
In particular, $\mathbf{v}_\pi, b_\pi$ and $r_\pi$ can be chosen as an extremal point of the convex polyhedron defined via the following linear program over the variables $v_1,\ldots,v_n,b,r \in \RR$:
\begin{align*}
	\max r & \text{ subject to } \\
	\mathbf{v}^T \mathbf{a} &\geq b + r &\text{ for all } \mathbf{a} \in A,\\	
	\mathbf{v}^T \mathbf{b} &\leq b - r &\text{ for all } \mathbf{b} \in B,\\	
	-1 \leq v_i &\leq 1 &\text{ for all } 1 \leq i \leq n,\\
	r &\geq 0.
\end{align*}
Note that if $A$ and $B$ are not separated by a hyperplane, then this system of linear inequalities will simply be infeasible, and moreover, the system is always bounded.
As is well known from polyhedral theory, extremal points of bounded polyhedra in $\RR^{n+2}$ are defined by a set of $n+2$ linearly independent inequalities in the linear program defining the polyhedron, where these inequalities hold with equality at the given extremal point.
   Any set of $n+2$ linearly independent tight inequalities is determined by a selection of at most $n$ points $A' \cup B'$ with $A',B' \subseteq D$, at most $n$ choices of $v_i = \pm 1$ and whether or not $r = 0$. Note that this is true \emph{independent} of the choice of $A$ and $B$. Conversely, any such selection determines a set of tight inequalities, which may possibly be linearly dependent and possibly inconsistent. However, observe that the number of such choices is still bounded by $|D|^{\bigoh(n)}$, and each choice determines a linear system of $n+2$ equations and variables that can be checked for consistency and uniqueness in time $n^{O(1)}$. 

The algorithm hence proceeds by (1) inspecting one after another all the $|D|^{\bigoh(n)}$ choices of tight inequalities (which implicitly includes all those choices induced by the actual partitions $(A,B) \in P$), (2) for each choice, checking the resulting system for a unique solution $\mathbf{v},b,r$ and finally (3) computing the signs of $\mathbf{v}^T\cdot \mathbf{d} - b$ for each $\mathbf{d} \in D$ to determine the resulting partition of $D$, and outputting it. The correctness of this procedure follows by construction, as does the claimed running time.
\end{proof}
\fi

\smallskip
\noindent \emph{Remark.}\quad A similar result was claimed and used by \citet[Page 14]{AroraBMM18} in their algorithm for dealing with \rNNT restricted to a special case of the setting treated in this section. However, the argument presented there seems to be incomplete. In particular, it makes the claim that ``the total number of possible hyperplane
partitions of a set of size $D$ in $\RR^n$ is at most $2 {D\choose n}$'', which is imprecise (consider, e.g., $|D|=n$). Moreover, the book listed as reference for bounding the total number of possible hyperplane partitions does not provide an algorithm for enumerating these efficiently, and enumeration is required for both our result and the result claimed in \cite{AroraBMM18}. A bound on the number of such partitions can be attributed to \cite{harding1967number}, but that does not yield efficient enumeration either. If the points are known to lie in general position, efficient enumeration could be carried out via a translation to hyperplane arrangements followed by an application of the results of \cite{EdelsbrunnerOS86}; however, in our setting we cannot guarantee that these lie in general position, necessitating our stand-alone proof of Lemma~\ref{lem:enum_partitions} above (which we believe to be also of general interest).

 The considerations preceding Lemma~\ref{lem:enum_partitions} do not immediately translate to neurons that are deeper than in the first layer, since the inputs to these activation functions are obtained by non-linear transformations of $D$. Nevertheless, we prove that it is also possible to derive an upper bound for neurons beyond the first hidden layer if we are provided information about the partitions of neurons in previous layers.
  
\iflong
\begin{lemma}
\fi
\ifshort
\begin{lemma}
\fi
\label{lemma:cells_partition}
Let $D$ be a set of data points, $u$ be a ReLU neuron, $F$ be the set of all neurons on paths from input nodes to $u$, and $x=|F|$. Given $D|_v$ for each $v\in F$, 
we can upper-bound the number of distinct active areas $D|_u\subseteq D$ over all networks consistent with the given selection of $\{D|_v~|~v\in F\}$ by
$ |D|^{\bigoh\left(n \cdot 2^x\right)}$, and it is possible to enumerate a superset of these in time $|D|^{\bigoh\left(n\cdot 2^x\right)}$.
\end{lemma}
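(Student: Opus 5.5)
Once the active areas of all predecessors of $u$ are fixed, each neuron behaves as an affine function on each cell of a partition of $D$ into at most $2^x$ classes. On each class, $u$'s pre-activation is affine in the input, so $u$'s active area restricted to that class is cut out by a single hyperplane. Lemma~\ref{lem:enum_partitions} then applies class by class.

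\textbf{Step 1 (cells).} For each subset $S\subseteq F$, let
\[
C_S=\Big\{d\in D : d\in D|_v \text{ for all } v\in S, \text{ and } d\notin D|_v \text{ for all } v\in F\setminus S\Big\}.
\]
These sets are determined by the given selection $\{D|_v\mid v\in F\}$, and they partition $D$ into at most $2^x$ cells (empty cells are discarded).

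\textbf{Step 2 (linearity on a cell).} We show by induction along a topological order of $F\cup\{u\}$ that for every $v\in F$ and every cell $C_S$, the function $f_v$ agrees on $C_S$ with some affine function $\ell_{v,S}:\RR^n\to\RR$. Input nodes are coordinate projections, so the base case holds. For a ReLU neuron $v\in F$, suppose all its in-neighbors (which lie in $F$) are affine on $C_S$. Then the pre-activation $g_v=\mathbf{a}_v^T\mathbf{y}_v+b_v$ is affine on $C_S$. If $v\in S$, then $f_v=g_v$ on $C_S$, since the activation is positive there. If $v\notin S$, then $f_v=0$ on $C_S$. Either way $f_v$ is affine on $C_S$. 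Linear neurons, if present, are affine trivially. Applying the same argument to $u$ shows its pre-activation $g_u$ equals an affine function $\ell_{u,S}(x)=\mathbf{c}_S^Tx+e_S$ on each cell.

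Consequently, $D|_u\cap C_S=\{d\in C_S:\mathbf{c}_S^Td+e_S>0\}$ for every $S$. The set on the right is one side of a hyperplane bipartition of $C_S$. There is one subtlety: Lemma~\ref{lem:enum_partitions} is phrased via weak-inequality separation. A strict open half-space cut of a finite set is also a weak separation, possibly after shifting the offset slightly. The trivial bipartitions (all of $C_S$ active, or none) are also included, so every possible restriction $D|_u\cap C_S$ is among the enumerated bipartitions.

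\textbf{Step 3 (counting and enumeration).} For each cell, Lemma~\ref{lem:enum_partitions} enumerates in time $|C_S|^{\bigoh(n)}\le|D|^{\bigoh(n)}$ a list of at most $|D|^{\bigoh(n)}$ candidate bipartitions. Each bipartition is added in both orientations, so that either side may play the role of the active side. Any consistent $D|_u$ is the union over cells of one chosen side per cell. Therefore the product of these lists is a superset of all active areas, of size at most
\[
\big(|D|^{\bigoh(n)}\big)^{2^x}=|D|^{\bigoh(n\cdot 2^x)}.
\]
It can be enumerated in the same time bound, since combining the choices costs only polynomial overhead per output.

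The main obstacle is Step 2. One must make sure the induction really yields a single affine map per cell for every neuron in $F$, which needs $F$ to be closed under taking in-neighbors; it is, by its definition via paths. One must also get the strict/weak and orientation details right when invoking Lemma~\ref{lem:enum_partitions}.
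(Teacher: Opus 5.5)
Your proposal is correct and follows essentially the same route as the paper: split $D$ into at most $2^x$ classes by the activation pattern on $F$, observe that on each class $u$'s pre-activation is a single affine function of the input, so its active area there is a hyperplane cut enumerable via Lemma~\ref{lem:enum_partitions}, and then take the product of these per-class lists to get $|D|^{\bigoh(n\cdot 2^x)}$ candidates. Your explicit induction along a topological order (using that $F$ is closed under in-neighbors) and your care with strict versus weak separation, orientation, and trivial bipartitions make rigorous the steps the paper states only informally.
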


\iflong
\begin{proof}
Let us start by defining the equivalence relation $\mathcal{R}$ on elements of $D$ induced by the $\{D|_v~|~v\in F\}$. Namely, two data points are equivalent for $\mathcal{R}$ if and only if, for every neuron $v$ in $F$ they either both belong to $D|_v$ or they both don't. This is trivially a equivalence relation, and since $|F|=x$, the number of equivalence classes is bounded by $2^x$. Let us call these equivalence classes $(A_i)_{0\leq i<2^x}$, and let $(a_i)_{0\leq i<2^x}$ be their respective sizes. \\
Now, we focus on each such equivalence class $A_i$ separately. For any pair of data points $d_1, d_2$ in the same $A_i$, we know by definition that the set of activated neurons on the way to the neuron $u$ is the same. Since the other operations in the network (biases and weights) are affine transformations, $d_1$ and $d_2$ undergo the same affine transformation, which can be written $h_i(d)=a^Td+b$ for some vectors $a$ and $b$, on their way to $u$.
Since this is the signal that reaches $u$, we know that the limit between the active and dead areas of $u$ is defined by $h_i(d)=0$, which defines a hyperplane in the original n-dimensional input space. Crucially, while we do not know what the values of $a$ and $b$ are, the number of possible ways this hyperplane can bipartition the data points in $A_i$ can be upper-bounded by $|A_i|^{\bigoh(n)}$ (which is, in turn, at most $|D|^{\bigoh(n)}$), and the set $P_i$ of these bipartitions can also be enumerated in the same asymptotic running time by using Lemma~\ref{lem:enum_partitions}.

In order to apply the above to obtain a set of bipartitions of all data points (not just those in a single $A_i$) into those which activate $u$ and those which do not, we define the set of bipartitions of $D$ by considering every combination of a possible bipartition from each of $P_1, P_2,..., P_j$---more formally, this is the set $\mathcal{P}=\{ (p, D\setminus p) | p \subseteq D, \exists p_1\in P_1…p_j\in P_j: p= p_1 \cup p_2 \cup … \cup p_j \}$. \\
Remark that the size of $\mathcal{P}$ is trivially upper bounded by $|D|^{\bigoh\left(n\cdot 2^x\right)}$ since all $P_i$ are of size upper bounded by $|D|^{\bigoh(n)}$. This completes the construction required for Lemma~\ref{lemma:cells_partition}.
     \end{proof}
\fi

We now show how to solve $\rNNT$ in polynomial time for 
each fixed value of $n$ and $w$:
 \iflong
\begin{theorem}
\fi
\ifshort
\begin{theorem}
\fi
\label{th:relu}
There is an algorithm that, given an instance $(G, D)$ of \rNNT such that every hidden neuron of the structure has out-degree exactly $1$, and the data points are encoded using $L$ bits overall, computes the global optimal solution in time $|D|^{\bigoh\left(n\cdot w\cdot 2^w\right)} \cdot \poly(L)$.  
 \end{theorem}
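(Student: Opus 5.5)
We prove Theorem~\ref{th:relu} by guessing the combinatorial structure of an optimal solution and then solving a convex problem for each guess. By Lemma~\ref{lemma:slide_global_relu}, there is an optimal solution in which every deep edge has weight in $\{-1,1\}$. Hence we first branch over all sign patterns on the deep edges. There are at most $2^{|E|}$ of them; since every hidden neuron has out-degree exactly $1$, the number of deep edges is at most $w$, so this costs a factor of at most $2^{w}$, which is constant for fixed $w$. After fixing the signs, the only free parameters are the weights of edges leaving input nodes and the biases of all non-input neurons.

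Next we guess the active areas $D|_v$ of all hidden ReLU neurons (and, if they are ReLU-activated, of the output neurons), processing the neurons in a topological order. For a neuron $u$ whose ancestors have already been assigned active areas, Lemma~\ref{lemma:cells_partition} enumerates a superset of the possible $D|_u$ in time $|D|^{\bigoh(n\cdot 2^{x})}$ with $x\le w$. Branching over the at most $w$ neurons therefore produces at most $|D|^{\bigoh(n\cdot w\cdot 2^w)}$ candidate tuples $(D|_v)_v$ overall. The true optimum's activation pattern is among them, because the enumerated sets are supersets of the realizable ones.

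For a fixed tuple, each data point $d$ has a fixed set of active neurons. Every neuron's pre-activation value on $d$ is then an affine expression: deep weights are the fixed constants $\pm1$, inactive neurons contribute $0$, and active neurons pass their pre-activation through unchanged. Since the deep weights are constants, this expression is \emph{linear} in the remaining variables (first-layer weights and biases). Here the out-degree-$1$ assumption and Lemma~\ref{lemma:slide_global_relu} are essential: without them we would get products of variables.

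The consistency requirements are also linear. For each hidden neuron $v$ we need pre-activation $\ge 0$ on $D|_v$ and $\le 0$ on $D\setminus D|_v$; using non-strict inequalities is harmless because ReLU is continuous at $0$. Each output coordinate is a linear function of the variables, so the loss is a convex quadratic. Each branch is therefore a convex quadratic program with polynomially many variables and constraints. It can be solved exactly in time $\poly(|D|,L)$ (e.g., via the ellipsoid method or interior-point methods for convex QP with rational data, which yield rational optimal solutions of polynomial bit size). We output the best solution over all branches. Infeasible branches, which come from sets in the enumerated superset that are not realizable, are simply discarded.

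The main obstacle is the correctness argument, not the running time. First, a branch's feasible solution must correspond to an actual network whose loss equals the QP objective. This holds because the constraints force the assumed activation pattern to coincide with the network's true behaviour, up to points exactly at $0$ where both interpretations agree. Second, the optimal network from Lemma~\ref{lemma:slide_global_relu} must be feasible for some branch. This follows from the superset guarantee of Lemma~\ref{lemma:cells_partition}, applied inductively along the topological order. Finally, the total count is $2^{w}\cdot |D|^{\bigoh(n w 2^w)}\cdot\poly(L)$, which gives the claimed bound.
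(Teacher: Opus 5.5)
Your proposal is correct and follows the paper's proof essentially step for step. You branch over $\pm1$ signs on the at most $w$ deep edges via Lemma~\ref{lemma:slide_global_relu}, enumerate candidate active areas in topological order via Lemma~\ref{lemma:cells_partition}, and solve a convex quadratic program with linear consistency constraints per branch, discarding infeasible branches. Beyond the paper, you also spell out two details it leaves implicit: why the pre-activations are linear in the remaining variables, and why non-strict inequalities are safe at the boundary (since $0^+=0$).
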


\ifshort
\begin{proof}[Proof Sketch]
The algorithm begins by exhaustively branching to determine the weight of each deep edge, which requires us to consider at most $2^w$ options in view of Lemma~\ref{lemma:slide_global_relu}. Once the weights of the deep edges are fixed, we branch to determine how each hidden and output neuron partitions the training set. For this to work, we need to process layers consecutively---we only branch on the possible partitions for one neuron after we have chosen the partitions for all of its hidden in-neighbors.

For each such neuron, we enumerate all feasible partitions of $D$ using Lemma~\ref{lemma:cells_partition}, and branch over the possibilities. For a given partition, we create a set of inequalities that are satisfied only if each element of $D$ indeed is affected by $u$ according to the partition. Once a partition has been chosen for all neurons, the function computed by the resulting neural network $\mathcal{N}$ for each neuron is fixed save for the weights of edges adjacent to input nodes, and the biases. We will optimize over all these free variables by constructing an instance of the \textsc{Quadratic Optimization} problem, whereas we optimize $L_D(a,b)$ subject to the set of constraints obtained when partitioning the neurons. The obtained instances of \textsc{Quadratic Optimization} can be solved in time polynomial in the number of variables and constraints, as well as the bitlengths of the datapoints, e.g., by using the ellipsoid method~\citep{Pang83,Kozlov79}.
\end{proof}
\fi

\iflong
\begin{proof}
The first step of the algorithm is to guess (i.e., exhaustively branch over) a weight within $\{-1,1\}$ for each deep edge. We know with Lemma~\ref{lemma:slide_global_relu} that it is safe to do so. For every one of the possibilities (bounded by $2^w$ because hidden neurons only have one outgoing edge and output neurons have none), we will compute the optimal solution consistent with this choice of weights for deep edges, and then we take the best over the possibilities.

Now that the weights of the deep edges are fixed, we will guess for each hidden and output neuron the way it partitions the training set. For this to work, we need to take into account the structure of the network: we will branch on the possible partitions for one neuron only once we have chosen the partitions for all of its hidden in-neighbors. This is always possible because we don't have any cycles in the network.

For each such neuron, we enumerate all feasible partitions of $D$ using Lemma~\ref{lemma:cells_partition}, and branch over the possibilities. For a given partition, we create a set of inequalities that are satisfied only if each element of $D$ indeed is affected by $u$ according to the partition: For each $\delta\in D$, we create an inequality of the form $g_\delta(\delta) \geq 0$ if $\delta$ is in the active part of the partition, and $g_\delta(\delta) \leq 0$ otherwise. For any $\delta$, $g_\delta$ is an affine function whose expression depends on the way $\delta$ was affected by each neuron encountered before $u$. Namely, for two data points $\delta$ and $\delta'$ which up to this point were always in the same partition, we have: $g_\delta=g_{\delta'}$. This function also depends on the weights of non-deep edges and biases, as will be important in the use of the inequalities below.

Once a partition has been chosen for all neurons, the function computed by the resulting neural network $\mathcal{N}$ for each neuron is fixed save for the weights of edges adjacent to input nodes, and the biases. We will optimize over all these free variables using a quadratic optimization problem. This problem will be: optimizing $L_D(a,b)$ subject to the set of constraints we built when partitioning the neurons. We remark that if we considered a configuration of partitionings that can't be realized, the set defined by the constraints will be empty. In this case, we simply discard this configuration.

Since the cost function is least-squares, this means that the problem is solvable in polynomial time (in the number of variables and constraints), for instance with the ellipsoid method. 
The number of variables is $\bigoh(n\cdot w)$ and the number of constraints $|D|\cdot w$, thus the quadratic optimization instance can be solved in time polynomial in $|D|$, $n$ and $w$~\citep{Pang83,Kozlov79}.

Overall, since we branch on $\bigoh(2^w)$ possibilities for the signs of deep edges, then $|D|^{\bigoh\left(n\cdot 2^w\right)}$ for each of the $w$ ReLU neurons, meaning we solve in total $|D|^{\bigoh\left(n\cdot w\cdot 2^w\right)}$ quadratic programs, for a total running time of $|D|^{\bigoh\left(n\cdot w\cdot 2^w\right)}$.
 \end{proof}
\fi

Note that the proof of Theorem~\ref{th:relu} can be directly extended to also include architectures where some of the hidden or output neurons use a linear activation function instead of ReLU. 
 \ifshort
Indeed, we need not guess a partition for the neuron and hence not construct a constraint in the quadratic program, and furthermore we may treat a linear neuron the same way as an activated ReLU in the following layer. 
 \fi
\iflong

To be more precise, consider what steps are undertaken in the proof when considering a ReLU-activated neuron: First, we need to guess on which part of the data this neuron is activated. For linearly activated neurons, this question is moot: There is no cut-off to be considered in this case. Therefore, there is also no constraint that we could add to the quadratic program that we construct in the proof (after all, the ``active region'' of a linearly activated neuron may be considered to be its entire input space). Therefore, also, the result of the computation of a linearly activated neuron will behave much like the result of the computation output at a ReLU-activated neuron in its active area, in the sense that it is always given as a linear combination of its respective inputs. To summarize, by simply not performing some of the processing tasks necessary for ReLU-activated neurons, we can also capture the networks with linearly activated neurons.

\fi
This yields the following theorem, which generalizes the tractability results of both \citet{AroraBMM18} and \citet{BoobDL22}.

\begin{corollary}
There is an algorithm that, given an instance $(G, D)$ of \mNNT such that every hidden neuron of the structure has out-degree $1$, and the data points are encoded using $L$ bits overall, computes the global optimal solution in time $|D|^{\bigoh\left(n\cdot w\cdot 2^w\right)} \cdot \poly(L)$.
\end{corollary}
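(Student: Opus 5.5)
The plan is to run the algorithm from the proof of Theorem~\ref{th:relu} essentially unchanged and to check that each step still makes sense when some neurons are linear.

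\textbf{Step 1: normalizing the deep weights.} Lemma~\ref{lemma:slide_local_relu} rescales the weights into a ReLU neuron $u$ and its bias by $|a_{uw}|$, and replaces the outgoing weight by its sign. This uses only positive homogeneity, $\sigma(\lambda z)=\lambda\sigma(z)$ for $\lambda\ge 0$, and the identity also satisfies it. So the lemma holds verbatim for a linear hidden neuron of out-degree $1$. Applying it from the deepest layers upward, exactly as in Lemma~\ref{lemma:slide_global_relu}, gives an optimal solution of \mNNT in which every deep edge has weight in $\{-1,1\}$. We branch over these at most $2^w$ sign patterns.

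\textbf{Step 2: branching on activation patterns.} We process the non-input neurons in topological order and branch on an active area only for ReLU neurons. For a linear neuron we treat its active area as all of $D$, so it adds neither a branching factor nor constraints. Lemma~\ref{lemma:cells_partition} carries over with this convention. Fix the active/dead choices of all ReLU neurons in $F$ and the deep signs. Then points in one equivalence class of $\mathcal{R}$ undergo the same affine map to $u$, because a linear neuron acts on every point like an always-active ReLU. That affine map is linear in the first-layer weights and the biases, since deep weights are now fixed constants. Lemma~\ref{lem:enum_partitions} then enumerates the possible splits of each class, which gives $|D|^{\bigoh(n2^x)}$ candidates per ReLU neuron.

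\textbf{Step 3: solving the convex subproblem.} For each full branch we add linear inequalities $g_\delta(\delta)\ge 0$ or $g_\delta(\delta)\le 0$ only at ReLU neurons. We then minimize $L_D(a,b)$ over the free variables, namely the weights on edges out of input nodes and all biases. On each branch every output is an affine function of these variables for each data point, so the objective is a convex quadratic under linear constraints. This is solvable in time polynomial in $|D|$, $n$, $w$ and $L$~\citep{Pang83,Kozlov79}, and infeasible branches are discarded.

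\textbf{Correctness and running time.} The optimum, after Step~1, agrees with some branch and is feasible for that branch's program. Conversely, every feasible point of a branch's program defines a network whose loss equals the program's objective. The number of branches is $2^w\cdot |D|^{\bigoh(n w 2^w)}$, which gives the stated bound.

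\textbf{Main obstacle.} The only delicate point is Step~1 for linear neurons: pushing $|a_{uw}|$ into the incoming weights must not interact badly with the normalizations done deeper in the network. Processing deepest-first, exactly as in Lemma~\ref{lemma:slide_global_relu}, handles this, since each application changes only edges incident to $u$ and the bias of $u$.
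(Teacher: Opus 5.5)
Your proposal is correct and takes the same approach as the paper: run the algorithm of Theorem~\ref{th:relu} unchanged, except that linear neurons get no activation-pattern branching and no constraints, and are treated like always-active ReLUs in later layers. Your explicit check that Lemma~\ref{lemma:slide_local_relu} also holds for linear hidden neurons, because the identity is positively homogeneous, fills in a step the paper leaves implicit; it is exactly what keeps the deep weights in $\{-1,1\}$ so that each branch stays a convex quadratic program.
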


\section{Blowing Up a Neural Network}
\label{sec:exten}
The tractability results of \citet{AroraBMM18} on constant-width shallow ReLU-networks raise the natural question of whether it is possible to (efficiently) perform depth reduction on deeper networks, 
so as to reduce a given (deep) instance for $\rNNT$ to a tractable (e.g., shallow) special case. It is widely conjectured that this is not possible in general without sacrificing some quality of the solution (see, e.g., \cite{HertrichDiss}, Conjecture 3.1. and~\cite{iclr/0001HL23,HertrichBSS21}). Among others, it is widely believed that there are functions that are efficiently computable by depth-3 but not by depth-2 ReLU-activated neural networks~\citep{EldanS16}, and similar results were also shown for deeper networks~\citep{CohenSS16,Telgarsky16}. 

While these results essentially rule out performing depth reduction without a negative impact on the accuracy, our tractability results in Section~\ref{sec:relu} allow us to approach the question from a different angle. In particular, we show that it is possible to transform an arbitrary constant-size ReLU-activated deep neural network architecture into one of the same depth which is (1) at least as powerful as the original architecture, (2) guaranteed to admit polynomial-time training, and (3) upper-bounded in size by a function of the size of the original architecture\footnote{We remark that such a transformation can be trivially provided for linearly-activated network architectures, since \lNNT is polynomial-time solvable on complete networks. We also explicitly note that changing the architecture is not always desirable or even possible.}.

To explain the construction, we start again with the base case of a single neuron. More precisely, let $v$ be a single ReLU-activated (hidden) neuron with $n$ inputs and $d$ outputs, and say the weights on the input edges are $a_1,\ldots,a_n$, the weights on the output edges are $a'_1,\ldots,a'_d$, and the bias at the hidden neuron is $b$. 
Now consider a new network with a single layer of ReLU-neurons which consists of $d$ ReLU-activated neurons $v_1,\ldots,v_d$, where every $v_i$ is connected to all inputs with the same weights $a_1,\ldots,a_n$ as before. Furthermore, each $v_i$ has a single output edge with weight $a'_i$. We call this network the \emph{blow-up} of $N$.
Let us first establish that this blow-up has the desired property:

\iflong
\begin{lemma} 
\fi
\ifshort
\begin{lemma}
\fi
\label{lemma:blow-up-single}
Let $N$ be a neural network with a single ReLU-activated hidden neuron as before, and let $N'$ be its blow-up. 
Let $f_N,f_{N'}: \RR^n \rightarrow \RR^d$ be the functions computed by $N$ and $N'$, respectively.
Then, $f_N = f_{N'}$ holds.
\end{lemma}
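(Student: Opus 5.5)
The plan is a direct coordinatewise computation: fix an arbitrary input $x \in \RR^n$ and show that the $j$-th output of $N$ equals the $j$-th output of $N'$ for every $j \in \{1,\ldots,d\}$.

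First, unfold the definition of the function computed by a network, as given in the Preliminaries. In $N$, the hidden neuron $v$ computes
\[
f_v(x) = \Bigl(\sum_{i=1}^n a_i x_i + b\Bigr)^+ .
\]
The $j$-th output neuron $o_j$ then receives the value $a'_j f_v(x)$, added to whatever other contributions it has. The intended setting is that $v$ is the only hidden neuron and the outputs are fed solely by $v$ plus their own biases, and possibly also by direct input edges. These extra contributions are carried over unchanged in $N'$, so they cancel in the comparison.

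Second, I fix the bias convention for the blow-up: each copy $v_i$ receives the same bias $b$ as $v$. This is implicit in the description ``with the same weights''; if it is not intended, I would add it to the construction. With this choice, every copy computes
\[
f_{v_i}(x) = \Bigl(\sum_{k=1}^n a_k x_k + b\Bigr)^+ = f_v(x),
\]
because it has the same affine pre-activation as $v$.

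Third, I note that $v_j$ is connected only to the output $o_j$, with weight $a'_j$. Hence $o_j$ in $N'$ receives $a'_j f_{v_j}(x) = a'_j f_v(x)$, together with the same remaining terms as in $N$ and the same bias. Applying the output activation (linear or ReLU) to identical pre-activations gives identical outputs. Since $x$ and $j$ were arbitrary, $f_N = f_{N'}$.

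The only real obstacle is bookkeeping rather than mathematics. One must state precisely that the biases of the copies and of the output neurons are inherited, and that all other edges into the outputs are kept. Once this is explicit, the claim is immediate, since every copy $v_i$ computes exactly the same scalar function as $v$ and each output sees that scalar with the same weight as before.
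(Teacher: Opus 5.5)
Your proposal is correct and follows essentially the same argument as the paper: each copy $v_i$ has the same incoming weights and bias as $v$, so it computes $f_v$. Its single edge of weight $a'_i$ then delivers exactly the signal that $v$ sent to the $i$-th output, and your explicit convention that the copies inherit the bias $b$ is the same one the paper's proof tacitly uses when it says the bias is the same.
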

\iflong
\begin{proof}
Consider the $i$-th neuron $v_i$. Its inputs are, by construction, the same as the inputs of the hidden neuron, call it $v$, in the original network. Furthermore, since the bias and the weight on its unique output edge is the same as the weight on the $i$-th output edge of the original hidden neuron $v$, the function $v_i$ computes on its single output necessarily is the same as the function $v$ computed on its $i$-th output.
\end{proof}
\fi
This procedure can be generalized to blow up an arbitrary ReLU-activated neural network, regardless of the number of its layers.
\ifshort

\fi
\iflong
Beginning with the last hidden layer, we perform a blow-up on each hidden neuron. That is, more formally, the blow-up of the full neural network architecture is obtained as the result of the following procedure: While there are hidden neurons of out-degree greater than one, replace one of the right-most (that is, in the hidden layer closest to the output layer) hidden neurons with out-degree greater than one with its blow-up. Once there are no hidden neurons with out-degree greater than one, output the current network.

By iteratively applying Lemma~\ref{lemma:blow-up-single}, we obtain:

\begin{lemmabis}
Let $\mathcal{N}$ be a ReLU-activated neural network and $\mathcal{N}'$ its blow-up. 
Let $f_{\mathcal{N}},f_{\mathcal{N}'}: \RR^n \rightarrow \RR^d$ be the functions computed by $\mathcal{N}$ and $\mathcal{N'}$, respectively.
Then, $f_{\mathcal{N}} = f_{\mathcal{N}'}$ holds.
\end{lemmabis}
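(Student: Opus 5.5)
The plan is to argue by induction on the number of blow-up steps in the iterative procedure, showing that each single replacement step preserves the computed function; the lemma then follows since the final network $\mathcal{N}'$ arises from $\mathcal{N}$ after finitely many such steps. Finiteness follows because each step chooses a right-most hidden neuron $v$ of out-degree greater than one. It replaces $v$ by copies of out-degree one and changes no out-degree of any neuron in a later layer. The copies' in-neighbors (in earlier layers) do gain out-degree, but these lie strictly to the left. Hence the number of offending neurons in the right-most offending layer strictly decreases. A lexicographic measure indexed by layers from right to left, recording the count of offending neurons per layer, therefore decreases, and since the network is acyclic with finitely many layers, the process terminates.

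For a single step, let $v$ be the replaced neuron, with in-neighbors $u_1,\ldots,u_t$, weights $a_{u_jv}$, bias $b_v$, and out-neighbors $w_1,\ldots,w_d$ with weights $a_{vw_i}$. The step introduces $v_1,\ldots,v_d$, each with the same in-edges, weights and bias as $v$, and a single out-edge $(v_i,w_i)$ of weight $a_{vw_i}$. I would localize Lemma~\ref{lemma:blow-up-single}. Conceptually, view the functions $f_{u_1},\ldots,f_{u_t}$ as the ``inputs'' of a one-hidden-neuron subnetwork, and the pre-activation contributions to $w_1,\ldots,w_d$ as its ``outputs''. Concretely, since every other neuron in $\{u_j\}$ and its ancestors is untouched, the functions $f_{u_j}$ are unchanged (formally, by induction along a topological order, all neurons not reachable from $v$ compute the same function). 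Each $v_i$ then computes
\[
f_{v_i}=\Bigl(\textstyle\sum_j a_{u_jv} f_{u_j} + b_v\Bigr)^+ = f_v ,
\]
so the term $a_{vw_i}f_{v_i}$ entering $w_i$ equals the term $a_{vw_i}f_v$ it received before. All other in-edges of $w_i$ are unchanged, so $f_{w_i}$ is unchanged. A second induction along the topological order, over neurons downstream of $v$, shows every remaining neuron computes the same function, and in particular so do the outputs.

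The main obstacle is only bookkeeping: making precise that Lemma~\ref{lemma:blow-up-single}, stated for a network whose inputs are genuine input nodes and whose outputs are output neurons, applies in the interior of a deeper network. I would handle this not by literally invoking the lemma on a subnetwork, but by repeating its one-line argument with $f_{u_j}$ in place of coordinate projections and with ``contribution to $w_i$'' in place of output value. Everything else is a direct topological-order induction.
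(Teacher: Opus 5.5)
Your proposal is correct and takes the same route as the paper, which obtains this lemma simply by iterating Lemma~\ref{lemma:blow-up-single} over the steps of the blow-up procedure. You also make explicit what the paper leaves implicit: the single-neuron argument transfers to the interior of the network via a topological-order induction, and the procedure terminates. Your lexicographic termination measure relies on in-neighbors lying in strictly earlier layers, which holds under the paper's layered convention.
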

\fi
With a slight abuse of notation, we will also refer to the \emph{blow-up} of a given network architecture $G$ (that is, one without weights and biases) as the architecture of the blow-up of an arbitrary neural network with architecture $G$, and the process for obtaining the blow-up of an architecture is the very same as for a weighted network (ignoring any weights). 
We now turn to establishing the properties of blow-ups with respect to $\rNNT$.
\iflong
\begin{proposition} 
\fi
\ifshort
\begin{proposition}
\fi
\label{prop:blow-up-opt}
Let $G$ be an arbitrary architecture and let $G'$ be its blow-up.
Furthermore, let $L_D$ and $L'_D$ be the loss functions associated with $G$ and $G'$, respectively.
Then, for any weights $a,b$ on $G$ and the minimizer $a^\ast,b^\ast$ of $L'_D$ on $G'$, we have $L'_D(a^\ast,b^\ast) \leq L_D(a,b)$.
\end{proposition}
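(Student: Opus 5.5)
The plan is to push any weight assignment on $G$ forward to a weight assignment on $G'$ computing the same function; the inequality then follows from minimality of $(a^\ast,b^\ast)$. Fix arbitrary weights and biases $a,b$ on $G$, giving a network $\mathcal{N}_{a,b}$. The blow-up procedure on the architecture is defined by exactly the same sequence of operations as the blow-up of a weighted network: repeatedly replace a right-most hidden neuron of out-degree greater than one by its blow-up. Running this procedure on $\mathcal{N}_{a,b}$ therefore yields a weighted network $\mathcal{N}'$ whose underlying architecture is $G'$. Its weights $a',b'$ are obtained by copying the in-edge weights and the bias of each replaced neuron to each of its copies, and by assigning to the $i$-th copy the weight of the $i$-th out-edge of the original neuron.

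The key step is to invoke the iterated version of Lemma~\ref{lemma:blow-up-single}, which gives $f_{\mathcal{N}_{a,b}} = f_{\mathcal{N}'}$ as functions $\RR^n\to\RR^d$. Because the loss $L_D$ depends on the weights only through the computed function evaluated at the data points, this yields $L'_D(a',b') = L_D(a,b)$. Since $(a^\ast,b^\ast)$ minimizes $L'_D$ over all weight assignments on $G'$, we get $L'_D(a^\ast,b^\ast)\le L'_D(a',b') = L_D(a,b)$.

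The main obstacle is ensuring that the architecture of the weighted blow-up does not depend on the weights, so that $(a',b')$ is a legitimate assignment on the fixed architecture $G'$. The procedure chooses which neuron to blow up based only on out-degrees and layer positions, never on weights, so fixing a canonical tie-breaking order makes the resulting graph identical for every weighting. A secondary point is that blowing up a neuron in a later layer and then a neuron in an earlier layer changes the out-degree structure. Processing right-to-left handles this: the copies created in earlier steps become distinct out-neighbors of earlier neurons, and each earlier neuron is later blown up into as many copies as it has out-neighbors at that time. Lemma~\ref{lemma:blow-up-single} applies locally at each step, since it concerns only a neuron together with its incident edges while the rest of the network is left unchanged. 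Finally, if the minimum over $G'$ is not attained, the same argument bounds the infimum instead.
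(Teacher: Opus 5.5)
Your proposal is correct and follows the same argument as the paper: you push the weights $a,b$ forward to the blow-up, use the iterated blow-up lemma to get $L'_D(a',b')=L_D(a,b)$, and conclude from minimality of $(a^\ast,b^\ast)$. Your extra remarks (the blown-up architecture does not depend on the weights, the right-to-left processing order, and the infimum case) make explicit what the paper leaves implicit in its phrase ``by construction''.
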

\iflong
\begin{proof}
By construction, simply setting the weights on $G'$ to be the weights of the blow-up of the network obtained from putting weights $a,b$ on $G$ gives a network with architecture $G'$ and weights $a',b'$ that satisfies $L'_D(a',b') = L_D(a,b)$.
Hence, in particular for the optimal choice of $a^\ast,b^\ast$ for weights on $G'$ it must hold that $L'_D(a^\ast,b^\ast) \leq L'_D(a',b') = L_D(a,b)$.
\end{proof}
\fi
The size of the blow-up of a network with a constant number of hidden neurons behaves as follows.

\iflong
\begin{proposition}
\fi
\ifshort
\begin{proposition}
\fi
Let $\mathcal{N}$ be a neural network with $\ell$ layers of $\lambda \geq 2$ hidden neurons each, and $o$ output neurons. Then the blow-up of $\mathcal{N}$ has at most $b \leq o \cdot \lambda^{\ell+1}$ hidden neurons, and at most $o \cdot \lambda^\ell$ neurons in its first hidden layer. If $\lambda = 1$, we have $b \leq o \cdot \ell$.
\end{proposition}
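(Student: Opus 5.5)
We count the hidden neurons of the blow-up layer by layer, from the output side backwards. The key quantity is the number of copies each original hidden neuron has after the blow-up. Index the hidden layers $1,\dots,\ell$ from inputs to outputs. By the blow-up procedure, a hidden neuron of out-degree $k$ is replaced by $k$ copies, each with out-degree $1$. Each copy keeps all in-edges of the original neuron, so the in-neighbors acquire additional out-edges. Since the procedure processes neurons from the right-most layer leftwards, when a neuron in layer $j$ is blown up its out-edges already go to the final copies of the layer-$(j+1)$ neurons. Its number of copies is therefore the number of copies (in the blown-up network) of its out-neighbors, counted with one copy per edge.

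Let $c_j$ be the maximum number of copies a single layer-$j$ neuron can receive. A layer-$\ell$ neuron has out-neighbors only among the $o$ output neurons, which are never copied, so $c_\ell \le o$. A layer-$j$ neuron with $j<\ell$ has at most $\lambda$ out-neighbors in layer $j+1$, and each of these has at most $c_{j+1}$ copies, all of which are adjacent to it. Hence $c_j \le \lambda c_{j+1}$, and by induction $c_j \le o\lambda^{\ell-j}$. (If the architecture also allows edges skipping layers, we still assume the standard layered form stated in the preliminaries.)

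Layer $j$ has $\lambda$ original neurons, so it contributes at most $\lambda\cdot o\lambda^{\ell-j} = o\lambda^{\ell-j+1}$ hidden neurons. For $j=1$ this gives the claimed bound of $o\lambda^\ell$ on the first hidden layer. Summing over all layers gives
\[
b \le \sum_{j=1}^{\ell} o\lambda^{\ell-j+1} = o\sum_{i=1}^{\ell}\lambda^i = o\,\frac{\lambda^{\ell+1}-\lambda}{\lambda-1} \le o\,\lambda^{\ell+1},
\]
where the last step uses $\lambda\ge 2$, so that $\lambda-1\ge 1$. For $\lambda=1$ each layer contributes at most $o$ neurons, so $b\le o\ell$.

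The main point needing care is the recursion $c_j\le\lambda c_{j+1}$. It relies on blowing up layer $j+1$ before layer $j$, which the procedure guarantees by always choosing a right-most eligible neuron. It also relies on a neuron of out-degree $1$ never being copied, which keeps the count exact and not larger. Everything else is routine summation.
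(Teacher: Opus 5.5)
Your proof is correct and is essentially the paper's argument. The paper inducts on depth and bounds the number of copies of each first-layer neuron by the size $o\lambda^{\ell-1}$ of the already blown-up second layer, which is your recursion $c_j \le \lambda c_{j+1}$ phrased per layer instead of per neuron; it then sums over layers as you do, and your explicit remarks on the right-most processing order and the strictly layered assumption make precise points the paper leaves implicit.
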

\iflong
\begin{proof}
For $\lambda = 1$, the statement becomes obvious.
For $\lambda \geq 2$, the proof is by induction on the depth $\ell$.
If $\ell = 1$, then there are at most $o$ new neurons generated for each of the at most $\lambda$ neurons in the single layer, giving $\lambda\cdot o$ new neurons at most, as desired.
For $\ell > 1$, consider the neural network restricted to the last $\ell-1$ layers, and perform the blow-up on this neural network of depth $\ell-1$. Now, by inductive hypothesis, there are now at most $o\lambda^{\ell-1}$ neurons in the second layer (that is, first layer of the restriction), and by construction, the first hidden layer will have its outputs duplicated so that there are now at most $o\cdot \lambda^{\ell-1}$ outputs for each of the at most $\lambda$ hidden neurons in the first layer. By definition of the blow-up,
this yields at most $\lambda \cdot o\cdot \lambda^{\ell-1} = o \cdot \lambda^\ell$ neurons in the first layer of the blow-up. Summing up gives the bound on the total size (assuming $\lambda \geq 2$).
\end{proof}
\fi

Recalling Theorem \ref{th:relu} from Section~\ref{sec:relu}, we now directly obtain: 
\begin{theorem}
\label{thm:blowup}
There is an algorithm that 
\begin{itemize}[topsep=0pt,noitemsep,leftmargin=*]
\item takes as input an instance $(G,D)$ of $\rNNT$ where $G$ has depth $\ell$, $n$ input nodes, $o$ output neurons and at most $\lambda$ hidden neurons per layer,
\item computes the blow-up $G'$ of $G$ and an optimal solution to $\rNNT$ on $(G',D)$, and
\item runs in time $|D|^{\bigoh\left( n\cdot \tau \cdot 2^\tau \right)}$, where $\tau=(\ell + \lambda^{\ell+1})\cdot o$.
 \end{itemize}
        \end{theorem}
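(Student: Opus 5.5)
The plan is to compose the blow-up construction with Theorem~\ref{th:relu}. First I would compute the blow-up $G'$ of $G$ by running the iterative procedure described above. At each step, a right-most hidden neuron of out-degree greater than one is replaced by copies, one per out-neighbor. Each replacement strictly decreases the number of hidden neurons of out-degree greater than one in the current layer, and it never increases out-degrees in later layers; it only copies in-edges, which raises out-degrees in earlier layers. So, processing layers from the output side backwards, the procedure terminates. Its output is an architecture in which every hidden neuron has out-degree exactly $1$. Here I would also note that a hidden neuron cannot have out-degree $0$, since it would then be an output neuron. Since $G'$ has size depending only on $\ell,\lambda,o$, constructing it takes time polynomial in that size, which is negligible.

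Next I would bound the number $w'$ of non-input neurons of $G'$ using the preceding proposition on the size of blow-ups. For $\lambda\ge 2$ there are at most $o\cdot\lambda^{\ell+1}$ hidden neurons, and for $\lambda=1$ there are at most $o\cdot\ell$. Adding the $o$ output neurons, which the blow-up leaves unchanged, gives $w' \le o\cdot\lambda^{\ell+1} + o\cdot \ell + o$ in all cases. Up to an absorbed constant (for instance $o\le o\lambda^{\ell+1}$, or a slightly more careful count) this is at most $\tau=(\ell+\lambda^{\ell+1})\cdot o$. The statement allows "at most $\lambda$ hidden neurons per layer," whereas the proposition assumes exactly $\lambda$ per layer. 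I would handle this by noting that the induction in its proof only uses an upper bound on layer sizes, so the same bound holds. The input nodes are untouched, so $G'$ still has $n$ inputs.

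Finally I would apply Theorem~\ref{th:relu} to $(G',D)$. Every hidden neuron has out-degree exactly $1$, so it returns an optimal solution of $\rNNT$ on $(G',D)$ in time $|D|^{\bigoh(n\cdot w'\cdot 2^{w'})}\cdot\poly(L)$. Substituting $w'=\bigoh(\tau)$ gives $|D|^{\bigoh(n\cdot\tau\cdot 2^{\tau})}$, with the $\poly(L)$ factor understood as in Theorem~\ref{th:relu}. Proposition~\ref{prop:blow-up-opt} additionally shows that this solution is at least as good as any weighting of $G$, although the statement only requires optimality on $G'$.

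The main obstacle is the size bookkeeping. One issue is reconciling the case split $\lambda=1$ versus $\lambda\ge 2$ and the count of output neurons with the single expression $\tau$. The other is the exponent: the $2^{w'}$ term means constant factors in $w'$ are not harmless inside $2^{(\cdot)}$. I would resolve this by bounding $w'\le\tau$ directly, using $o\lambda^{\ell+1}+o\ell+o \le (\ell+\lambda^{\ell+1})o + o$, and absorbing the extra additive $o$ as a constant factor of $2^{o}\le 2^{\tau}$ in the exponent.
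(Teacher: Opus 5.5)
Your route is the same as the paper's, which obtains the theorem by combining the blow-up construction, the size proposition, and Theorem~\ref{th:relu}. The termination argument for the blow-up, the out-degree-exactly-$1$ check, and the unchanged values of $n$ and $o$ are all fine.

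However, the step you single out as the main obstacle does not give the stated bound as written. You bound $w'\le o\lambda^{\ell+1}+o\ell+o=\tau+o$ and then absorb the extra $o$ via $2^{o}\le 2^{\tau}$. That only yields $n\,w'\,2^{w'}\le n(\tau+o)\,2^{\tau}2^{o}\le 2n\tau\cdot 4^{\tau}$, so the running time is $|D|^{\bigoh(n\tau 4^{\tau})}$. Since $4^{\tau}$ is not $\bigoh(2^{\tau})$, this is exactly the non-harmless constant inside $2^{(\cdot)}$ you warned about. Your alternative $o\le o\lambda^{\ell+1}$ has the same defect, because it only gives $w'\le 2\tau$.

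The fix is to take the maximum of the two case bounds from the proposition rather than their sum, which gives $w'\le\tau$ exactly:
\begin{itemize}
\item If $\lambda\ge 2$, then $w'\le o\lambda^{\ell+1}+o\le o\lambda^{\ell+1}+o\ell=\tau$, using $\ell\ge 1$.
\item If $\lambda=1$, then $w'\le o\ell+o=o(\ell+\lambda^{\ell+1})=\tau$.
\end{itemize}
This is precisely why $\tau$ has the form $(\ell+\lambda^{\ell+1})\cdot o$. With $w'\le\tau$, Theorem~\ref{th:relu} gives $|D|^{\bigoh(n\tau 2^{\tau})}$ directly.
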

Observe that the running time in Theorem~\ref{thm:blowup} is polynomial for every architecture of constant size and essentially tight with respect to the lower bound of \citet{FroeseHN22} for architectures with a single hidden neuron. In combination with Proposition \ref{prop:blow-up-opt}, this implies a polynomial-time algorithm which transforms an arbitrary constant-size architecture into a new one which has already been solved to optimality, with no sacrifice in either accuracy or depth.

We remark that the procedure described in this section can be seen as a rough analogue to \emph{improper learning} in learning theory in that it involves a change of the hypothesis class.
 In particular, while we prove that our blow-up procedure does not increase the training error, it might increase the generalization error of empirical risk minimization in learning settings.
We also point out that a similar procedure was described in an earlier work of Maass, albeit in it was applied in a different context~\cite{Maass97}.

\section{Training Linear Networks}
\label{sec:lin}

\smallskip
\noindent \textbf{Untanglings of Linear Networks.}\quad
As discussed in Section~\ref{sec:known}, the complexity of \lNNT\ is open when restricted to architectures with arbitrarily many input nodes while considering the number of other nodes in the network to be fixed by a constant, and very few classes of architectures are known to admit polynomial-time training for linear activations. Here, we identify a general substructure of the architecture which, when present, guarantees the polynomial-time solvability of \lNNT.
         
Specifically, an \emph{untangling} of an architecture $G=(V,E)$ with input nodes $x_1,...,x_\alpha$ and output nodes $y_1,...,y_\beta$ is a partitioning $\Pi=(B,C_1,...C_\alpha,D_1,...,D_\beta)$ such that:
\begin{itemize}[noitemsep, topsep=0pt,leftmargin=*]
\item Each $G[C_i]$, $1\leq i\leq \alpha$ contains $x_i$ as its only source;
\item Each $G[D_j]$, $1\leq j \leq \beta$ contains $y_j$ as its only sink; and
\item For each $x_i$ and $y_j$, if $y_j$ is reachable from $x_i$ in $G$, there exist $s\in C_i$ and $t\in D_j$ such that $st \in E$.
\end{itemize}

As the first result in this section, we will show that if one is given a network architecture along with an untangling, it is possible to solve \lNNT\ in polynomial time. Towards this, we observe that since all transformations in such networks are affine, it is possible to assume without loss of generality that non-zero biases are only present on output neurons.
Indeed, one can slide the biases along the edges in the network towards the output neurons, and if we multiply them by the weights of edges met along the way, the computed function does not change. As an example, given an edge with weight $a_1$ from a neuron associated with bias $b_2\neq 0$ to a neuron associated with bias $b_1$, we can alter the biases by setting $b_2:=0$ and $b_1:=b_1+ a_1\cdot b_2$ since $a_1 \cdot ( x + b_2) +b_1 = a_1\cdot x + (b_1+ a_1\cdot b_2)$. Applying this iteratively along all the edges of a network yields:

\begin{observation}
\label{obs:nobias}
For each neural network $\mathcal{N}_{a,b}$ with linear activation functions, there exists a neural network $\mathcal{N}_{a,b'}$ such that $b'$ is non-zero only on the output neurons and $f_{\mathcal{N}_{a,b}}=f_{\mathcal{N}_{a,b'}}$.
\end{observation}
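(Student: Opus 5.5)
We argue by induction along a topological order of the architecture, pushing every nonzero bias one step forward until all of them sit on output neurons. Fix $\mathcal{N}_{a,b}$ and a topological ordering $v_1,\ldots,v_N$ of the non-input neurons of $G$, which exists because $G$ is acyclic. We process the neurons in this order and, at each step, modify only the bias vector. The invariant we maintain is that after processing $v_1,\ldots,v_k$, every hidden neuron among $v_1,\ldots,v_k$ has bias $0$ and the computed function is still $f_{\mathcal{N}_{a,b}}$. The edge weights $a$ are never changed, which matches the statement's requirement that the new network be of the form $\mathcal{N}_{a,b'}$.

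The local step at a hidden neuron $u$ with current bias $\beta_u$ is as follows. Because $\sigma$ is the identity, $f_u = \mathbf{a}_u^T \mathbf{y}_u + \beta_u$, so $f_u = g_u + \beta_u$, where $g_u$ is what $u$ computes with bias $0$. Let $w$ be any out-neighbour of $u$. Its input has the form $a_{uw} f_u + \sum_{v\neq u} a_{vw} f_v + \beta_w$, which equals $a_{uw} g_u + \sum_{v\neq u} a_{vw} f_v + (\beta_w + a_{uw}\beta_u)$. We therefore set $\beta_u := 0$ and update $\beta_w := \beta_w + a_{uw}\beta_u$ for every out-neighbour $w$ simultaneously. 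This is exactly the identity $a_1(x+b_2)+b_1 = a_1 x + (b_1 + a_1 b_2)$ stated before the observation, applied once per outgoing edge.

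To verify the update, note that each out-neighbour $w$ receives the same pre-activation value on every input $x\in\RR^n$, so $f_w$ is unchanged. Every other neuron is either unaffected or depends on $u$ only through its out-neighbours. A straightforward induction on topological position then shows that all $f_v$ for $v\neq u$, and in particular all output functions, are unchanged. The only functions that change are $f_u$ itself and its bias.

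This step only adds to the biases of neurons that come later in the order, since every out-neighbour of $u$ follows $u$. Hence it never reintroduces a nonzero bias on an already processed neuron, and the invariant is preserved. Every hidden neuron has at least one out-neighbour, so there is always somewhere to push its bias. After all hidden neurons are processed, the only possibly nonzero biases lie on output neurons; we call the resulting vector $b'$.

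No step is genuinely hard here. The only point requiring care is the processing order: biases must be pushed in topological order, so that a neuron's bias is cleared only after it has received all the contributions pushed into it from its in-neighbours.
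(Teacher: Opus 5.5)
Your proposal is correct and follows the paper's approach: slide each bias forward along the edges using $a_1(x+b_2)+b_1 = a_1x+(b_1+a_1b_2)$, iterating until only output neurons carry nonzero bias. Your version adds care the paper's sketch leaves implicit. You process neurons in topological order and push a hidden neuron's bias to all of its out-neighbours at once, which is what the single-edge identity needs when a neuron has out-degree greater than one.
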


Observation~\ref{obs:nobias} allows us to proceed towards establishing the claimed tractability result.

\iflong
\begin{theorem}
\fi
\ifshort
\begin{theorem}
\fi
\label{thm:untang}
\lNNT can be solved in time $\poly(|D|+ |V|)$ if an untangling of the architecture $G=(V,E)$ is provided as part of the input.
  \end{theorem}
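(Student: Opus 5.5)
Given an untangling $\Pi=(B,C_1,\dots,C_\alpha,D_1,\dots,D_\beta)$, the plan is to show that every linear network on $G$ computes a map of the form $x\mapsto Mx+c$, where $M_{ji}=0$ whenever $y_j$ is not reachable from $x_i$. Conversely, the untangling lets us realize \emph{every} such matrix $M$ together with every bias vector $c$. Once this is established, \lNNT on $G$ is equivalent to unconstrained least squares over the free entries of $M$ and $c$. This splits into $\beta$ independent ordinary linear regressions, one per output $y_j$, each using the coordinates $x_i$ with $y_j$ reachable from $x_i$. Each regression is solved in polynomial time via the normal equations. Finally we convert the optimal $(M,c)$ back into weights on $G$, which is again polynomial.

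\textbf{Necessity.} By Observation~\ref{obs:nobias} we may assume biases only on outputs, so each $f_{y_j}$ equals $c_j+\sum_i M_{ji}x_i$. Here $M_{ji}$ is the sum over all $x_i$--$y_j$ paths of the product of edge weights along the path. This is immediate by induction along a topological order, and it shows $M_{ji}=0$ when there is no path. All output biases $c_j$ are free.

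\textbf{Realization, the main step.} Fix a target $M$ (with the support pattern above) and $c$. Set every edge weight to $0$ except the following.
\begin{itemize}
\item Inside each $C_i$: since $x_i$ is the only source of $G[C_i]$, every vertex of $C_i$ is reachable from $x_i$ within $G[C_i]$. Choose a spanning out-arborescence of $G[C_i]$ rooted at $x_i$ and give its edges weight $1$. Every vertex of $C_i$ then computes exactly $x_i$.
\item Inside each $D_j$: symmetrically, choose a spanning in-arborescence of $G[D_j]$ toward $y_j$ with weights $1$. A value injected at any vertex of $D_j$ then arrives at $y_j$ with factor $1$.
\item Between blocks: for each reachable pair $(i,j)$, pick one edge $st$ with $s\in C_i$, $t\in D_j$ and give it weight $M_{ji}$.
\item Everything else, including vertices in $B$, gets weight $0$; set the bias of $y_j$ to $c_j$.
\end{itemize}

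The obstacle is ensuring no spurious paths appear, since the arborescences and chosen cross edges are the only nonzero edges. A nonzero path leaving $x_i$ stays in $C_i$, because arborescence edges inside $C_i$ stay inside and no other edges out of $C_i$ are nonzero except chosen cross edges. It then takes exactly one chosen cross edge into some $D_j$, and then follows the unique arborescence path to $y_j$. It cannot leave $D_j$, since every non-root vertex of the in-arborescence has a single nonzero out-edge, inside $D_j$. A nonzero path also cannot return to a $C$-block, because the only nonzero edges entering $C_{i'}$ come from inside $C_{i'}$.

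Hence each chosen cross edge contributes exactly one path, with product $M_{ji}$, and $f_{y_j}=c_j+\sum_i M_{ji}x_i$. One subtlety to check is vertices lying in several roles (for instance $x_i$ itself in $D_j$ when $x_i$ is adjacent to $y_j$). Since $\Pi$ is a partition, each vertex lies in exactly one block, so this is consistent.

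Combining necessity and realization, the optimal loss equals the minimum of the decoupled regressions. Computing the arborescences, the regressions and the weight assignment all takes $\poly(|D|+|V|)$ time.
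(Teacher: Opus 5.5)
Your proposal is correct and follows essentially the same route as the paper: unit weights on an out-branching of each $C_i$ and an in-branching of each $D_j$, one free cross edge per reachable pair carrying the matrix entry, biases only on the outputs, and then least squares over the support-constrained matrix and bias vector. Your explicit argument that no spurious nonzero paths arise fills in a step the paper leaves implicit, and solving $\beta$ decoupled regressions via the normal equations is a more concrete substitute for the paper's appeal to the ellipsoid method for quadratic programming.
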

      
\iflong
\begin{proof}
Consider an instance $(G, D)$ along with an untangling $\Pi$ of $G$. 
For each pair $(x_i,y_j)$ such that $y_j$ is reachable from $x_i$ in $G$, we choose one edge $e_{ij}=(s,t)$ such that $s\in C_i$ and $t\in D_j$. Let $E_{CD}$ be the set containing all edges which are selected in this manner.
For each $C_i$, we select an edge set $E_{C_i}\subseteq E$ which forms an out-branching spanning $C_i$, i.e., $E_{C_i}$ forms a spanning tree of $C_i$ with $n_i$ as its only source. Let $E_C=\cup_i E_{C_i}$.
For each $D_j$, we analogously select an in-branching $E_{D_j}\subseteq E$ spanning $D_j$, i.e., $E_{D_j}$ forms a spanning tree of $D_j$ with $o_j$ as its only sink, and similarly $E_D=\cup_j E_{D_j}$.
 
Now, we set the weights in $G$ as follows. For each edge in $E_{C}\cup E_D$, we set $a_e:=1$, while for each edge $e'\not \in E_{CD} \cup E_C \cup E_D$ we set $a_{e'}:=0$. 
 The only edges whose weights are not yet set are the elements of $E_{CD}$, and moreover it is necessary to also set the biases for the output neurons (see Observation~\ref{obs:nobias}). 

Let $P_{ij}$ be the sum of products of edge weights along all paths from input $i$ to output $j$, and  $b_j$ the bias at the $j$-th output. Note that in a linearly activated neural network, the $j$-th output value is given by $\sum_i P_{ij}x_i + b_j$. Note that $P_{ij} = 0$ if there is no such path. Consider then the quadratic optimization instance over $n \times d$-dimensional variable matrix $\mathbf{T} = (t_{ij})_{ij}$ and the $d$-dimensional variable vector $\mathbf{z} = (z_k)_k$. We constrain the variables $t_{ij}$ by  requiring each $t_{ij}=0$ whenever there is no path from input $i$ to output $j$ in $G$.

Let $\lambda$ be the value of the optimal solution $\mathbf{T}^\ast, \mathbf{z}^\ast$ with respect to the optimization function $\sum_{(x,y) \in D} || y - \mathbf{T} x - \mathbf{z}^T||_2^2$, and let $\lambda'$ be the optimal value for the instance of $\lNNT$ according to Eq. \eqref{eq:loss} achieved by an optimal setting of weights and biases. Since the optimal sum-of-products matrix $\mathbf{P}^\ast$ and biases $\mathbf{b}^\ast$ corresponding to the optimal weight and bias setting is a particular solution for the quadratic optimization problem outlined above, and \eqref{eq:loss} specializes to the expression $\sum_{(x,y) \in D} || y - \mathbf{P}^\ast x - (\mathbf{b}^\ast)^T||_2^2$ we have that $\lambda \leq \lambda'$.
Conversely, the setting of weights $a$ and biases $b$ achieved by $a_{e_{ij}} = t^\ast_{ij}$ for $e_{ij}$ as above and $b_j = z^\ast_j$ produces a corresponding sums-of-product matrix $\mathbf{P}$ and biases $\mathbb{b}$ such that $\mathbf{P} = \mathbf{T}^\ast$ and $\mathbf{b} = \mathbf{z}^\ast$ holds, and $\lambda = \lambda'$ follows. This proves that the computed weight setting is optimal given the network architecture $G$. The quadratic programming problem of computing the optimal solutions $\mathbf{T}^\ast, \mathbf{z}^\ast$ can once again be solved in polynomial time via, e.g., the ellipsoid method \citep{Kozlov79}. 
\end{proof}
\fi
 
 \smallskip
 \noindent \textbf{Computing Untanglings.}\quad
Given Theorem~\ref{thm:untang}, the natural next question concerns the complexity of actually computing an untangling in a neural network. The aim of this section is to provide a comprehensive answer to that question. As our first result, we prove that deciding whether a given architecture has an untangling is, in general, intractable.

\iflong
\begin{theorem}
\fi
\ifshort
\begin{theorem}
\fi
\label{thm:nph}
Deciding whether a given architecture $G$ has an untangling is \NP-hard.
\end{theorem}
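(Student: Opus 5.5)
I would prove Theorem~\ref{thm:nph} by a polynomial-time reduction from \textsc{3-SAT}. The key feature to exploit is that the blocks $C_i$ and $D_j$ must be pairwise disjoint, while each reachable pair $(x_i,y_j)$ needs an edge from $C_i$ to $D_j$. So a hidden neuron can serve only one input block or only one output block, and this competition for hidden neurons is what should encode a truth assignment.

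\textbf{Construction (variables).} For each variable $v$ I would add an input $x_v$ and two hidden \emph{literal} neurons $p_v,n_v$ with edges $x_v p_v$ and $x_v n_v$. I would also add a private output $z_v$ reachable from $x_v$ only through $p_v$ and $n_v$, plus a second input $x'_v$ with edges to $p_v,n_v$. The intended behaviour is:
\begin{itemize}
\item $z_v$ must absorb at least one literal neuron into $D_{z_v}$;
\item the input blocks $C_v$ and $C'_v$ together must keep at least one literal neuron.
\end{itemize}
Which literal neuron stays on the input side then encodes the value of $v$: $p_v\in C_v$ means ``$v$ true''.

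\textbf{Construction (clauses).} For each clause $c$ I would add an output $y_c$ with an edge from each literal neuron of a literal occurring in $c$. I would route things so that $x_v$ reaches $y_c$ only through the corresponding literal neurons, and the only available edge from $C_v$ into $D_c$ leaves a literal neuron kept in $C_v$. A satisfied literal then provides the required $C$--$D$ edge, and an unsatisfied clause has none.

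The naive version of the variable gadget does not force this. With $C_v=\{x_v\}$ and $p_v\in D_{z_v}$, the edge $x_v p_v$ already serves the pair $(x_v,z_v)$, so nothing compels a literal neuron to remain on the input side. I would therefore remove all direct input-to-literal shortcuts by subdividing them, adding hidden neurons $h_v$ with edges $x_v h_v$, $h_v p_v$, $h_v n_v$. I would then adjust the private outputs and their in-edges until the following holds: in every untangling, exactly one of $p_v,n_v$ lies in $C_v$ and the other lies in an output block that cannot serve any $y_c$. I would first try attaching two private outputs $z_v^p,z_v^n$, each fed by both literal neurons. Checking that no ``mixed'' placement of literal neurons, such as putting $p_v$ into some clause block $D_c$ (allowed because of the induced-subgraph semantics), yields a spurious untangling is the main obstacle. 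I expect this step to require the most case analysis and possibly further padding gadgets.

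\textbf{Correctness.} Given a satisfying assignment, I would build the partition explicitly. Each $C_v$ is $x_v$ together with its subdivision neurons and the true literal neuron, which is connected with $x_v$ as its only source. Each private output block takes the false literal neuron, and each $D_c=\{y_c\}$, served by an edge from a true literal neuron. Every unused vertex goes into $B$. Conversely, from any untangling I would read off the assignment given by which literal neuron lies in $C_v$. The gadget property forces this to be well defined, and the clause condition forces every clause to contain a true literal. The construction is linear in the formula size, which gives \NP-hardness.
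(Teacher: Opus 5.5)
\textbf{There is a genuine gap, and it lies in the clause gadget as well as in the variable gadget you flag.}

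The third condition of an untangling is quantified over \emph{every} pair $(x_i,y_j)$ such that $y_j$ is reachable from $x_i$ \emph{in $G$}. Reachability in $G$ does not depend on the partition. In your construction every variable $u$ occurring in clause $c$ reaches $y_c$, via the path $x_u\to h_u\to \ell\to y_c$ where $\ell$ is the literal neuron of $u$ in $c$. So \emph{each} such $x_u$ needs its own edge from $C_u$ into $D_c$, and one satisfied literal is not enough.

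Your completeness direction fails concretely. Take a clause $c$ containing the literal $w$, with $w$ false under the assignment. Then $C_w=\{x_w,h_w,n_w\}$, $p_w$ sits in a private output block, and $D_c=\{y_c\}$. None of $x_w,h_w,n_w$ is adjacent to $y_c$, so the pair $(x_w,y_c)$ is unserved. The extra inputs $x'_v$ only add more such pairs. As designed, the clause condition encodes ``all literals true'', not ``some literal true''.

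The obvious patches fail:
\begin{itemize}
\item Letting $D_c$ absorb the false literal neuron does not work, because a literal occurs in several clauses and blocks are disjoint. You also already need that neuron in the private block.
\item Adding any partition-independent edge from $C_u$ into $D_c$ makes the constraint vacuous.
\end{itemize}
The condition at $y_c$ is a conjunction over all inputs that reach it. So the disjunction of a clause must be realised by the existential choice of $s\in C_i$ for a \emph{single} input block, which your design does not do. On top of this, the variable gadget that should force exactly one of $p_v,n_v$ into $C_v$ is never specified or verified. You defer it to unspecified case analysis and padding, yet it is the core of the reduction.

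\textbf{How the paper's reduction handles this.} It is built around exactly this ``for every input block, some neighbour'' structure, reducing from Dominating Set 3-Partitioning.
\begin{itemize}
\item There are three inputs $d_1,d_2,d_3$, each adjacent to all of a hidden layer $L_2=V$.
\item The outputs are $|V|+1$ copies $u^{(1)},\dots,u^{(|V|+1)}$ of each vertex $u$, with an edge $v\to u^{(i)}$ if and only if $v\in N_H[u]$.
\item Every input reaches every output, so each $C_i\cap V$ must contain an in-neighbour of every output, i.e.\ it must be dominating.
\item The $|V|+1$ copies guarantee, by pigeonhole, that some copy of each vertex is a singleton output block. Hence output blocks that swallow hidden neurons can be ignored.
\end{itemize}

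\textbf{A SAT-based repair.} The same template works from NAE-3SAT. Use two inputs $T,F$, each adjacent to all literal neurons. Add many copies of a variable output fed by $\{p_v,n_v\}$, and many copies of a clause output fed by the literal neurons of $c$. Then $C_T$ and $C_F$ must each hit every output. This forces $p_v,n_v$ to be split between the two blocks, and forces every clause to contain both a true and a false literal.
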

\iflong
\begin{proof}
\fi
\ifshort
\begin{proof}[Proof Sketch]
\fi
We provide a polynomial reduction from the \NP-hard \textsc{Dominating Set 3-Partitioning} problem~\citep{HeggernesT98}: Given an undirected graph $H = (V, E)$, decide whether its vertex set can be partitioned as  $V = A_1 \cup A_2 \cup A_3$ such that $A_1,A_2$ and $A_3$ are all dominating sets of $H$. Here, a \emph{dominating set} of a graph is a subset $X$ of its vertices such that every vertex $v$ is either in $X$ itself, or adjacent to some $w \in X$; we say that such $w$ \emph{dominates} $v$. 

Let $H = (V,E)$ be an undirected graph. We first describe how to transform $H$ into an architecture $G$ such that $G$ has an untangling if and only if $H$ can be partitioned into three dominating sets. $G$ is constructed as a three-layered architecture, that is, with vertex set $L_1 \cup L_2 \cup L_3$, where $L_1 = \{d_1,d_2,d_3\}$, $L_2 = V$, and $L_3$ consists of $|V|+1$ disjoint copies of $V$, that is, $L_3 = \bigcup_{v\in V} \{v^{(1)},\ldots,v^{(|V|+1)}\}$.
Additionally, $G$ has all possible edges from $L_1$ to $L_2$, and $v \in L_2$ has an edge towards $u^{(i)} \in L_3$ in $G$ if and only if $u=v$ or $v$ was adjacent to $u$ in $H$.
\ifshort
To complete the proof, it suffices to establish that every untangling in $G$ implies a partitioning of $H$ into three dominating sets, and vice-versa that every such partitioning of $H$ guarantees that there is an untangling in $G$.
\fi
\iflong

Let us first argue that every partition of $V$ into three dominating sets ensures that $G$ has an untangling: Let $A_1,A_2,A_3$ be three such dominating sets. Recall that we have $L_2 = V$, so it makes sense to think of $A_i \subseteq V$ also as subsets of $L_2$. Therefore, we may set $C_i = \{d_i\} \cup A_i$ for $i=1,2,3$, and choose every $D_j$ to be some singleton $\{v_j\} \subseteq L_3$ (so formally, $j=1,\ldots,|V|^2 + |V|$).
Clearly, there is a path from $d_i$ to every $v \in L_3$, and therefore, there must be an edge between $C_i$ and $D_j = \{v\}$ for every pair $(i,j)$. But since $A_i$ is a dominating set for every $i$, in particular, $v$ must be dominated by some $u \in A_i$, and by construction of $G$, there is an edge $uv$ from $C_i$ to $D_j$.

Towards the other direction, consider some untangling of $G$. Note that, if there is any untangling of $G$, 
then there is also an untangling where $D_j$ does not contain any vertices from $L_2$: Suppose that $v^{(i)}$ is the only sink of $D_j$ for some $j$, and that moreover $u \in L_2 \cap D_j$. Since there are $|V|+1$ copies of $v$ in $L_3$ and the sets in an untangling are disjoint, not all of the respective $D_j$ can contain a vertex from $L_2$. Hence, at least one of the copies of $v$ in $D_j$ is a singleton. By symmetry, this implies that we can safely move all the vertices from $L_2$ in $D_j$ into one of the $C_i$ connected to the singleton $\{v\}$. Therefore, it suffices to argue that an untangling of $G$ with only singletons in the $D_j$ implies that $H$ had a partition into three dominating sets. Note that $C_1,C_2,C_3$ satisfy the following condition: since every output is reachable from every input in $G$, in particular, every $C_i$ must be connected to every singleton $D_j$. Since the adjacencies between $L_2$ and $L_3$ are given by  the domination-relation on $V$, this implies that every $C_i$ must dominate every vertex in $V$. 
\fi
\end{proof}

While negative, the intractability of computing an untangling in a general architecture is far from unexpected. For our next result, we take aim at the restriction of \lNNT where the architectures have an arbitrary but fixed bound on the number of hidden neurons---a case which is notable not only due to the fact that the target architectures will typically be much smaller than the training set, but also because its complexity remains open (cf.\ Section~\ref{sec:known}). We show that under this restriction, the problem of determining whether the architecture has an untangling is solvable in linear time.

\iflong
\begin{theorem}
\fi
\ifshort
\begin{theorem}
\fi
\label{thm:untangbounded}
There is an algorithm which either computes an untangling in a given architecture $G=(V,E)$ with $k$ hidden neurons in time $k^{\bigoh(k)}\cdot |V|$, or correctly determines that none exists.
\end{theorem}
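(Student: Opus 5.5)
We reduce the search to guessing where the $k$ hidden neurons go and then deciding the rest greedily. Let $H$ be the set of hidden neurons, with $|H|=k$. In any untangling $\Pi=(B,C_1,\dots,C_\alpha,D_1,\dots,D_\beta)$, every input node $x_i$ lies in $C_i$ and every output $y_j$ lies in $D_j$. An input node cannot be in any $D_j$, since then $D_j$ would have $x_i$ as a source; moreover, $x_i$ is the only source of $G[D_j]$ only if it is the only vertex, which is impossible. Symmetrically, an output node cannot be in any $C_i$ or $D_{j'}$ with $j'\neq j$. Hence all freedom lies in assigning each hidden neuron to $B$, to some $C_i$, or to some $D_j$.

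\textbf{Reducing to few blocks.} A block $C_i$ that contains a hidden neuron must be connected, so it contains a hidden neuron that is an out-neighbor of $x_i$. Therefore at most $k$ of the $C_i$ and at most $k$ of the $D_j$ are non-singleton. We guess the assignment as a function $\phi:H\to\{B\}\cup\{c_1,\dots,c_k\}\cup\{d_1,\dots,d_k\}$, where the $c_\ell,d_\ell$ are abstract labels. Up to relabeling there are $k^{\bigoh(k)}$ such functions.

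\textbf{Binding labels to terminals.} Each label $c_\ell$ must be bound to an input $x_i$, and each $d_\ell$ to an output $y_j$, injectively. The constraint is that the resulting $G[C_i]$ has $x_i$ as its only source, which for acyclic graphs is equivalent to every vertex of $C_i$ being reachable from $x_i$ inside $C_i$. Since $x_i$ may only connect to the group through direct edges, the feasible choices for $c_\ell$ are exactly those inputs $x_i$ for which every source of $G[\phi^{-1}(c_\ell)]$ is an out-neighbor of $x_i$, provided the group's own sources reach everything in it. The analogous statement holds for $d_\ell$ and outputs.

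\textbf{Main obstacle: the injective matching and reachability.} After a binding, we must check the third condition for every reachable pair $(x_i,y_j)$. The binding cannot be brute-forced, because $\alpha,\beta$ are unbounded. We handle it as follows. Each input $x_i$ has a \emph{type}, namely its out-neighborhood within $H\cup\{\text{outputs}\}$ together with its reachability set of outputs. The type of an output is defined symmetrically. The reachability set is determined by the neighbors in $H$ and by direct edges, so inputs that are unbound and have the same neighborhood in $H$ behave identically, apart from the direct edges to outputs.

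For a fixed $\phi$ we therefore guess, for each label, the \emph{$H$-neighborhood class} of the terminal it is bound to. There are at most $2^{k}$ classes, so this costs $2^{\bigoh(k^2)}$ in total; this is still $k^{\bigoh(k)}$-ish only if bounded carefully. To obtain $k^{\bigoh(k)}$, we instead guess, for each label, only which vertices of the group it must neighbor; this is forced. Within a class we may then pick any representative.

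The verification then works in two parts. For an unbound input and an unbound output, the requirement is a direct edge whenever the pair is reachable. This can be checked in $\bigoh(|V|+|E|)$ by comparing, per input class, the count of reachable outputs against the direct-edge degree. Pairs involving a bound group reduce to adjacency tests between the group and terminals or other groups, which again amount to counting over classes. The step I expect to be hardest is arguing rigorously that choosing which representative to bind within a class is safe. The worry is that two inputs in the same $H$-class might differ in their direct edges to outputs. I would try an exchange argument: binding a terminal to a group only adds adjacencies, so we should bind the terminals that are most deficient in direct edges. Such terminals can be identified per class in linear time by counting.
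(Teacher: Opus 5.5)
Your guess of $\phi$ and your characterization of legal bindings (every source of $G[\phi^{-1}(c_\ell)]$ must be an out-neighbor of the chosen input, and symmetrically for outputs) are correct and match the paper's first branching step. The gap is the step you flagged: you never determine which concrete terminals to bind, and the class-based plan cannot do it.

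Guessing an $H$-neighborhood class for each of up to $k$ used labels gives up to $2^{k^2}$ combinations, which is not $k^{\bigoh(k)}$. Your patch of guessing only the neighbors inside the label's own group loses information that matters. The arc needed between the part of an input $x$ bound to $c_\ell$ and the part of an output bound to $d_m$ may be an edge from $x$ into $\phi^{-1}(d_m)$. Also, the outputs $x$ reaches depend on its neighbors in $B$.

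Worse, ``any representative'' and ``most deficient first'' are wrong. Take inputs $x_1,x_2,x_3$, hidden neurons $h_1,h_2$, outputs $y_1,y_2$, with edges:
$x_i\to h_1$ and $x_i\to h_2$ for all $i$;
$h_1\to y_1$, $h_1\to y_2$, $h_2\to y_1$;
direct edges $x_1\to y_2$, $x_2\to y_2$, $x_3\to y_1$.
All three inputs have the same $H$-neighborhood, and each misses exactly one reachable output. Let $\phi(h_1)=c_1$ and $\phi(h_2)=d_1$, so $d_1$ is forced onto $y_1$. Binding $c_1$ to $x_3$ yields the untangling $C_3=\{x_3,h_1\}$, $D_1=\{h_2,y_1\}$. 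Binding it to $x_1$ instead leaves $(x_3,y_2)$ with no arc. What you face is a covering problem on the violated reachable pairs, and counting within classes cannot settle it.

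The missing idea is the paper's second branching step, which does away with classes entirely. Fix $\phi$ and a partial binding, treating unbound terminals as singleton parts. Suppose some input $x$ reaches an output $y$ in $G$, but no edge goes from the current part of $x$ to the current part of $y$. Then every untangling extending the current state binds $x$ to a still-free $c$-label or binds $y$ to a still-free $d$-label. If both are already bound, discard the branch.

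Branch over these at most $2k$ options, applying your sole-source/sole-sink test at each binding. Each branch uses up one of at most $k$ labels, so the search tree has depth at most $k$ and $k^{\bigoh(k)}$ leaves.

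Correctness goes both ways. When no violated pair remains, the current state is an untangling, with leftover free labels moved into $B$. Conversely, along the branch that follows a true untangling, some endpoint of each violated pair is bound in that untangling but not yet in the current state. So that branch is never cut.
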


\iflong
\begin{proof}
\fi
\ifshort
\begin{proof}[Proof Sketch]
\fi
Let $W\subseteq V$ be the set of hidden neurons in $G$. We begin by exhaustively branching over all of the at most $k^{\bigoh(k)}$ partitionings of $W$ into sets $\Pi'=(B,C'_1,...C'_\alpha,D'_1,...,D'_\beta)$ (for all choices of $\alpha$ and $\beta$, both of which are trivially upper-bounded by $k$). For each such choice of $\Pi'$, the algorithm then proceeds by determining whether it can be extended to an untangling of $G$ by adding an input node to each $C'_i$, $1\leq i\leq \alpha$, an output neuron to each $D'_j$, $1\leq j\leq \beta$, and keeping all remaining input nodes and output neurons as singletons in the untangling, in a way which satisfies the reachability constraint of an untangling. To this end, we will employ a further set of exhaustive branching rules which will iteratively assign singleton output neurons and input nodes to the sets in $\Pi'$. 
\ifshort
Intuitively, whenever we identify that an input node is missing a connection to an output node (due to at least one of these not being assigned to a part in $\Pi'$), we exhaustively branch over all ways one of these two neurons can be assigned to a part in $\Pi'$. 
\fi
\iflong 
Whenever we attempt to extend a set in $\Pi'$ in this way, we also check that the newly added node is indeed the sole source or sink, respectively, in that set; if this sanity check fails, we immediately discard the given choice and proceed to the next branch.

Given a choice of $\Pi'$ fixed in some branch of the algorithm, we say that an input node $v$ is \emph{satisfied} if every output neuron $w$ it can reach in $G$ is assigned to a part which can be reached by an edge from the part $v$ is assigned to; in particular, if $v$ or $w$ have not been assigned to a part in $\Pi'$ yet, we consider them to be assigned to a singleton part.  Crucially, if at any point we reach a situation where every input node is satisfied and all parts have been assigned to a node, then the extension of $\Pi'$ computed in the branching procedure is already an untangling which the algorithm can output. 

For our second branching step, let us consider an input node $v$ which is not yet satisfied due to the existence of an output neuron $w$ such that there is no edge between the part of $v$ and the part containing $w$. In this case, the only way the current partitioning $\Pi'$ could be extended to an untangling is if either (1) $v$ is assigned to some part $C'_i$, $1\leq i\leq \alpha$ (assuming that $v$ has not been assigned to some part in one of the preceding branching steps), or (2) if $w$ is assigned to some part $D'_j$, $1\leq j\leq \beta$ (assuming that $w$ has not been assigned to a part in one of the preceding branching steps). Since the total number of parts is upper-bounded by $\alpha+\beta\leq k$, we can once again exhaustively branch over the at most $k$ many options of assigning $v$ or $w$ to a free part in $\Pi'$; observe that if both $v$ and $w$ are already assigned to parts, then this leads to the currench branch being aborted. We repeat this procedure until all input nodes are satisfied, or until there are no free parts in $\Pi'$ remaining, or until the aforementioned sanity check fails. 

To argue the claimed running time, we note that by first indexing which output neurons are reachable from each of the at most $k$ hidden neurons in the first layer in total time at most $\bigoh(k^2|V|)$, we can determine for each input node $v$ the set of reachable output neurons in time $\bigoh(k)$ as a simple union of the indexed reachability sets of the neighbors of $v$. Together with the fact that the total depth of the branching procedure in the second step of the algorithm is upper-bounded by $k$, we conclude that the running time of the algorithm can be upper bounded by $k^{\bigoh(k)}\cdot |V|$.

Finally, to argue correctness, on one hand every extension $\Pi'$ that the algorithm outputs must be an untangling: each part contains precisely one source or sink, and the existence of the required edges between parts follows by the fact that each input node is satisfied. On the other hand, if an untangling $\Pi$ exists then the algorithm will at some point consider a partitioning $\Pi'$ in the first step that forms the projection of $\Pi=(B,C_1,...C_n,D_1,...,D_o)$ on the hidden neurons of $G$ (where $o$ is the number of output neurons). Then at each branching step, it is possible to verify that the algorithm will always have a branch that will allow it to expand a part $C'_i$ or $D'_j$ in $\Pi'$ to make it precisely $C_i$ or $D_j$, respectively. Hence, at least one branch of the algorithm will result in an untangling of $G$, as required.
\fi
\end{proof}

As our final result, in the next Theorem~\ref{thm:untangtw} we show that it is also possible to compute untanglings in linear time even on architectures with many hidden neurons, provided that such architectures are ``well-structured'' in a graph-theoretic sense. In particular, we show that this holds for architectures which are tree-like in the sense of having bounded \emph{treewidth}: a highly established structural graph parameter which measures how tree-like a graph is~\citep{RobertsonS83}. Specifically, here we consider the treewidth of the underlying undirected graph, i.e., the simple graph obtained by replacing each directed edge in the architecture by an undirected one.

To put Theorem~\ref{thm:untangtw} into context, we remark that the proof is based on an application of Courcelle's well-known Theorem~\citep{Courcelle90} and hence a definition of treewidth will not be required for an exhibition of the result or its proof; instead, we refer to the multitude of existing surveys and materials on the topic~\citep{CBook,Bodlaender16a}. We remark that the result is incomparable to Theorem~\ref{thm:untangbounded}: while it can easily be shown that every architecture with $k$ hidden neurons also has treewidth at most $k$ and hence Theorem~\ref{thm:untangtw} can be applied to a strict superclass of architectures, the running time guaranteed here has a nonelementary dependency on the treewidth of the graph (as opposed to the $k^{\bigoh(k)}$ dependency in Theorem~\ref{thm:untangbounded}).

\iflong
\begin{theorem}
\fi
\ifshort
\begin{theorem}
\fi
\label{thm:untangtw}
Let $t$ be an arbitrary but fixed integer. Given an architecture $G=(V,E)$ of treewidth at most $t$, it is possible to compute an untangling (or correctly determine that none exists) in time $f(t)\cdot |V|$ for some computable function $f$.
\end{theorem}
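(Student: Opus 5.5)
We will express the existence of an untangling as a sentence in monadic second-order logic (MSO$_2$, or even MSO$_1$ over the directed graph with edge relation $E$), and then invoke Courcelle's Theorem~\citep{Courcelle90} in its optimization/construction form. That form not only decides an MSO property on graphs of treewidth at most $t$ in time $f(t,|\varphi|)\cdot|V|$, but also outputs a witnessing assignment for free set variables. The central difficulty is that an untangling consists of $\alpha+\beta$ blocks, which is unbounded, so we cannot quantify over one set variable per block. Instead we encode the whole partition with a constant number of set variables.

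The encoding uses three set variables $X_C, X_D, X_B$ that partition $V$, plus a set of edges or vertices fixing the internal structure. Each $C_i$ must contain $x_i$ as its only source in $G[C_i]$. We therefore require that every vertex of $C_i$ is reachable from $x_i$ inside $G[C_i]$, and that each block is weakly connected. To recover individual blocks from $X_C$, we add an edge set $F_C\subseteq E$ and intend the blocks $C_i$ to be exactly the out-branchings of $F_C$ rooted at input nodes. Concretely, every non-input vertex of $X_C$ has exactly one $F_C$-in-edge, that in-edge comes from $X_C$, and every input node lies in $X_C$ with no $F_C$-in-edge. Because $G$ is acyclic, each vertex of $X_C$ then determines a unique root input, so $F_C$ splits $X_C$ into the blocks. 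We do the same for $X_D$ with an edge set $F_D$ forming in-branchings toward output nodes. The source/sink conditions on the induced subgraph $G[C_i]$ are the subtle part. A vertex of $C_i$ other than $x_i$ must have some in-neighbour within its own block, and $x_i$ has no in-neighbours at all, so it is automatically a source. It remains to forbid an edge from a vertex of $C_i$ into another vertex of $C_i$ from creating an extra source, which cannot happen. What must be ruled out instead is two blocks touching such that edges join them: the induced graph $G[C_i]$ only concerns vertices of $C_i$, so we need only express ``$u,v$ in the same block'' MSO-definably. We do this as: there is no set $Z$ that contains the root-path closure of $u$ under $F_C$ while excluding $v$. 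Equivalently, $u$ and $v$ are in the same $F_C$-weak component, which is a standard MSO-definable connectivity formula.

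With ``same block'' definable as a formula $\mathrm{same}_C(u,v)$ (and likewise $\mathrm{same}_D$), the third condition becomes: for all input $x$ and output $y$ such that $\mathrm{reach}(x,y)$ holds, there exist $s,t$ with $\mathrm{same}_C(x,s)$, $\mathrm{same}_D(y,t)$, and $E(s,t)$. Reachability is MSO-definable as ``every set closed under out-neighbours containing $x$ contains $y$''. The first two conditions say that within each block the root is the unique source (respectively sink) of the induced subgraph. They are expressed as: for every $v\in X_C$ not an input, there is $u$ with $E(u,v)$ and $\mathrm{same}_C(u,v)$; dually for $X_D$. The resulting formula $\varphi(X_C,X_D,F_C,F_D)$ has constant length, independent of $G$ and $t$.

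Applying Courcelle's Theorem, given a tree decomposition of width $t$ computed by Bodlaender's linear-time algorithm, yields either a satisfying assignment or a proof that none exists, in time $f(t)\cdot|V|$. From the assignment we read off the blocks as components of $F_C$ and $F_D$, with $B=X_B$. The main obstacle is correctness of the encoding: showing that any untangling yields an assignment (choose spanning out-/in-branchings, as in the proof of Theorem~\ref{thm:untang}), and conversely that the blocks defined by $F_C$ really have $x_i$ as their only source in the induced subgraph. I would check the latter direction carefully, using acyclicity.
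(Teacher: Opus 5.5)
Your proposal is correct and is essentially the paper's proof. Both encode the unboundedly many blocks through spanning out-branchings of the $C_i$ and in-branchings of the $D_j$ chosen as edge sets, express the unique-root and reachability-implies-arc conditions in MSO, and apply Courcelle's Theorem to find a witness or certify that none exists. Your extra vertex-set variables $X_C,X_D$ handle singleton blocks and the requirement that every input/output node be a root more explicitly than the paper's formula over components of a single edge set. Just require that $F_C$ and $F_D$ use only edges inside $G[X_C]$ and $G[X_D]$ respectively, and note that quantifying over these edge sets needs $\mathrm{MSO}_2$ rather than $\mathrm{MSO}_1$, which Courcelle's Theorem supports on bounded treewidth.
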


\iflong
\begin{proof}
We prove the theorem by constructing a formula $\Phi(X)$ in Monadic Second Order Logic (MSOL) such that $G\models \Phi(S)$ for some $S\subseteq E$ if and only if $G$ admits an untangling $\upsilon=(B,C_1,\dots,C_\alpha,D_1,\dots,D_\beta)$. Once that is done, we invoke Courcelle's Theorem~(\citeyear{Courcelle90}) to find such a set $S$ or determine that none exists; moreover, the set $S$ will allow us to construct such an untangling. 

Before proceeding, we recall the basics of Courcelle's Theorem and MSOL while referring the dedicated materials for a more detailed exposition~\citep{CBook,Bodlaender16a}. A formula in MSOL is built from atomic formulas over variables which are interpreted over vertices, edges, vertex sets and edge sets in the graph. Atomic formulas are connected with the usual Boolean connectives $(\lnot,\land,\lor,\rightarrow,\leftrightarrow)$, and variables may either appear as free or be existentially or universally quantified. The atomic formulas can express the vertex-edge incidence relation, set inclusion, and equality; in the case of directed graphs, they can also express the existence of a directed edge from one vertex to another. Courcelle's Theorem then states that, for each fixed MSOL formula $\Psi(X)$ and fixed integer $t$, it is possible to determine whether a given graph $G'=(V',E')$ admits an interpretation $S$ such that $G\models \Psi(S)$ (and also output such $S$ if it exists) in time linear in $|V'|$.
	
To describe the formula $\Phi(X)$, we recall that for each part $C_i$, $1\leq i\leq \alpha$, $G[C_i]$ contains the input node $x_i$ as its sole source. Hence $G[C_i]$ contains a spanning tree with $x_i$ as its only node of in-degree $0$; such a tree is called an \emph{out-branching} of $G[C_i]$, and we denote by $E^C_i$ the edges in this tree. 
Analogously, each subgraph $G[D_j]$, $1\leq j\leq \beta$, contains a spanning tree with output neuron $y_j$ as its only node of out-degree $0$ (a so-called \emph{in-branching} of $G[D_j]$), and we denote by $E^D_j$ the edges in this tree. Given an untangling $\upsilon$, let $E_\upsilon$ be the set of edges obtained as the union of the edges in all of the associated in-branchings and out-branchings, i.e., $E_\upsilon=\big(\bigcup_{1\leq i\leq \alpha}E^C_i \big) \cup \big(\bigcup_{1\leq j\leq \beta}E^D_j \big)$. 

Crucially, it is possible to construct $\upsilon$ from the edge set $E_\upsilon$ in linear time: each $C_i$, $1\leq i\leq \alpha$ is the set of vertices that can be reached from $x_i$ via $E_\upsilon$, and similarly each $D_j$, $1\leq j\leq \beta$, is the set of vertices that can reach $y_j$ via $E_\upsilon$. Our aim will be to construct $\Phi(X)$ so that $G\models \Phi(S)$ if and only if $S=E_\upsilon$ for some untangling $\upsilon$. Once that is done, the theorem will follow from Courcelles's Theorem.

To achieve this, we set 
$$\Phi(X)=\forall Y: \big(\texttt{CC}(Y,X) \implies (\texttt{InB}(Y) \vee \texttt{OutB}(Y))\big) $$ 
$$ \wedge \quad \forall Y,Z: \big(\texttt{CC}(Y,X) \wedge \texttt{InB}(Y) \wedge \texttt{CC}(Z,X)$$ $$\wedge \texttt{OutB}(Z,X)\big) \implies \big(\texttt{Path}(Y,Z)\implies \texttt{Arc}(Y,Z)),$$
where:
\begin{itemize}
\item $\texttt{CC}(Y,X)$ is a subformula which is satisfied if and only if the edge set $Y$ forms a connected component of the edge set $X$;
\item $\texttt{InB}(Y)$ is a subformula which is satisfied if and only if the edge set $Y$ is an in-branching whose source is also a source in $G$;
\item $\texttt{OutB}(Y)$ is a subformula which is satisfied if and only if the edge set $Y$ is an out-branching whose sink is also a sink in $G$;
\item $\texttt{Path}(Y,Z)$ is a subformula which is satisfied if and only if there is a directed path in $G$ from a source in $Y$ to a sink in $Z$;
\item $\texttt{Arc}(Y,Z)$ is a subformula which is satisfied if and only if there is an edge from a vertex incident to an edge in $Y$ to a vertex incident to an edge in $Z$.
\end{itemize}
It is easy to see that each of the subformulas described above can be straightforwardly expressed in Monadic Second Order logic. Since $\Phi(X)$ is only satisfied if (1) each connected component of the edge set $X$ forms either an out-branching or in-branching, and (2) reachability from an input node to an output neuron is preserved by the edges between these in-branchings and out-branchings, we conclude that indeed $G\models \Phi(S)$ if and only if $S=E_\upsilon$ for some untangling $\upsilon$, as desired.
\end{proof}
\fi

\section{Concluding Remarks}
We have examined the theoretical boundaries of computational tractability for training neural networks both for the fundamental cases of ReLU and linear activations, complementing a flurry of recent results establishing increasingly stronger lower bounds for network training. Our results generalize recent algorithms for optimal ReLU-activated network training~\citep{AroraBMM18,BoobDL22} and are among the first to identify a non-trivial class of linear-activated architectures admitting polynomial-time training. 

A natural direction for future work is to examine the extent to which the identified islands of tractability can be generalized before stepping into intractable territory. A long-term goal is to obtain a clear cutoff between intractable and tractable cases of network training. We conclude with three specific questions directly arising from our results:
\begin{enumerate}
\item Is \lNNT polynomial-time tractable when restricted to constant-treewidth architectures? 
\item Is it possible to solve \rNNT\ for constant-size architectures containing hidden neurons with out-degree greater than $1$?
\item Can the obtained results be generalized to other loss functions?
\end{enumerate}

\begin{ack}
The authors confirm that there are no competing interests.
We acknowledge support by the Austrian Science Fund (FWF, START Project Y1329).
Mathis Rocton further acknowledges support by the \includegraphics[width=0.5cm]{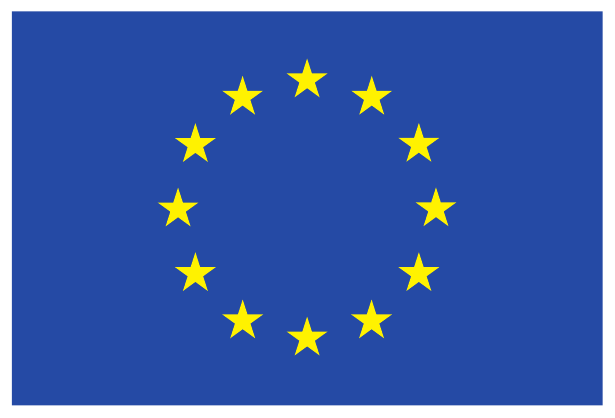} European Union's Horizon 2020 research and innovation COFUND programme (LogiCS@TUWien, grant agreement No 101034440).
\end{ack}

  \bibliographystyle{icml2023}
     \bibliography{ref}
\end{document}